\documentclass{article}

\usepackage{fullpage,times}

\usepackage{amsthm,amsfonts,amsmath,amssymb,epsfig,color,float,graphicx,verbatim}
\usepackage{algorithm,algorithmic}

\newtheorem{theorem}{Theorem}

\newtheorem{lemma}{Lemma}

\newcommand{\reals}{\mathbb{R}}
\newcommand{\E}{\mathbb{E}}

\newcommand{\ba}{\mathbf{a}}
\newcommand{\be}{\mathbf{e}}
\newcommand{\bx}{\mathbf{x}}
\newcommand{\bw}{\mathbf{w}}

\newcommand{\bv}{\mathbf{v}}

\newcommand{\bsigma}{\boldsymbol{\sigma}}

\newcommand{\Ocal}{\mathcal{O}}

\newcommand{\Dcal}{\mathcal{D}}

\newcommand{\Ncal}{\mathcal{N}}

\newcommand{\Wcal}{\mathcal{W}}

\newcommand{\norm}[1]{\left\|#1\right\|}
\newcommand{\inner}[1]{\left\langle#1\right\rangle}

\newcommand{\secref}[1]{Sec.~\ref{#1}}

\renewcommand{\eqref}[1]{Eq.~(\ref{#1})}
\newcommand{\lemref}[1]{Lemma~\ref{#1}}

\newcommand{\thmref}[1]{Thm.~\ref{#1}}



\title{On the Complexity of Bandit Linear Optimization}
\author{Ohad Shamir\\Weizmann Institute of Science\\\texttt{ohad.shamir@weizmann.ac.il}}
\date{}

\begin{document}

\maketitle

\begin{abstract}
We study the attainable regret for online linear optimization problems with
bandit feedback, where unlike the full-information setting, the player can
only observe its own loss rather than the full loss vector. We show that
the price of bandit information in this setting can be as large as $d$,
disproving the well-known conjecture \cite{dani2007price} that the regret
for bandit linear optimization is at most $\sqrt{d}$ times the
full-information regret. Surprisingly, this is shown using ``trivial''
modifications of standard domains, which have no effect in the
full-information setting. This and other results we present highlight some
interesting differences between full-information and bandit learning, which
were not considered in previous literature.
\end{abstract}

\section{Introduction}

We consider the problem of bandit linear optimization, which is a repeated
game between a player and an adversary. At each round $t=1,\ldots,T$ the
player chooses a point $\bw_t$ from a compact subset $\Wcal$ of $\reals^d$,
and simultaneously the adversary chooses a loss vector $\bx_t\in \reals^d$
(under some regularity assumptions described in \secref{sec:setting}). The
player incurs a loss $\inner{\bx_t,\bw_t}$, and can observe the loss but not
the loss vector $\bx_t$. The player's goal is to minimize its expected
cumulative loss, $\E\left[\sum_{t=1}^{T}\inner{\bx_t,\bw_t}\right]$, where
the expectation is with respect to the player's and adversary's possible
randomization. The performance of the player is measured in terms of expected
regret, defined as
\begin{equation}\label{eq:regret}
\E\left[\sum_{t=1}^{T}\inner{\bx_t,\bw_t}\right]
-\min_{\bw\in\Wcal}\E\left[\sum_{t=1}^{T}\inner{\bx_t,\bw}\right].
\end{equation}

Bandit optimization has proven to be a useful abstraction of sequential
decision-making problems under uncertainty, such as multi-armed bandits and
online routing (see \cite{BuCe12} for a survey). Moreover, using
online-to-batch conversion techniques, algorithms for this setting can be
readily applied to derivative-free stochastic optimization, where our goal is
to stochastically optimize an unknown function given only noisy views of its
values at various points.

The attainable regret in the bandit setting can be compared to the attainable
regret in the full-information setting, where $\bx_t$ is revealed to the
player after each round. Clearly, since the player receives less information
in the bandit setting, the attainable regret will be larger. This degradation
is known as the ``price of bandit information'' \cite{dani2007price}, and
characterizing it for general domains $\Wcal$ has remained an open problem.
However, the standard conjecture and common wisdom (as articulated in
\cite{dani2007price}) is that for linear optimization, this price is at most
a multiplicative $\sqrt{d}$ factor, where $d$ is the dimension. Indeed, as
far as we know, this holds for all domains that have been previously studied
in the literature:
\begin{itemize}
  \item When the domain $\Wcal$ is the corners of the $d$-dimensional
      simplex (a.k.a. multi-armed bandits setting), the minimax optimal
      regret is $\Theta(\sqrt{dT})$, vs. $\Theta(\sqrt{\log(d)T})$ in the
      full-information setting.
  \item When the domain $\Wcal$ is the boolean hypercube $\{-1,+1\}^d$, the
      minimax optimal regret is $\Theta(d\sqrt{T})$, vs.
      $\Theta(\sqrt{dT})$ in the full-information setting
      \cite{dani2007price,AuBuLu11}.
  \item When the domain $\Wcal$ is the unit Euclidean ball, there is an
      algorithm with regret $\Ocal(\sqrt{dT})$, vs. $\Ocal(\sqrt{T})$ in
      the full-information setting. \cite{BubCesKa12}
  \item There is an $\Omega(d\sqrt{T})$ lower bound for a certain
      non-convex subset of the hypercube (a cartesian product of $d$
      $2$-dimensional spheres) \cite{dani2008stochastic}. The corresponding
      full-information regret bound is $\Ocal(\sqrt{dT})$.
\end{itemize}
We note in passing that there are other partial-information settings where
the situation is different, but these are distinct from the bandit linear
optimization setting we focus here (e.g. non-linear bandit optimization
\cite{Shamir12} or using different information feedback, e.g.
\cite{cesa2012combinatorial}).

The main contribution of this paper is disproving this conjecture, and
showing that the price of bandit information for online linear optimization
can be as large as $d$ rather than $\sqrt{d}$. We do this by proving that the
$\Ocal(\sqrt{dT})$ regret upper bound for the Euclidean ball is surprisingly
brittle, and the attainable regret becomes $\Omega(d\sqrt{T})$ after
``trivial'' changes of the domain which do not matter at all in the
full-information setting. These changes include (1) Shifting the ball away
from the origin, and (2) Taking a simple convex subset of the Euclidean ball
with a ``flat'' boundary, such as a cylinder or a capped Euclidean ball. We
also explain how our techniques can be potentially applicable to other
domains. Since the full-information regret in these cases is
$\Ocal(\sqrt{T})$, this establishes a price of bandit information on the
order of $d$. This gap is tight as worst-case over all domains (or all convex
domains), because for any domain, it is possible to get $\Ocal(d\sqrt{T})$
regret \cite{BubCesKa12,hazan2013volumetric}, and the attainable regret in
the full-information case is generally at least $\Omega(\sqrt{T})$.

We note that our lower bounds hold even against a stochastic adversary, which
chooses loss vectors i.i.d. from a given distribution. In such a setting, any
algorithm attaining $\Ocal(\sqrt{d^\alpha  T})$ regret can be used to find an
$\epsilon$-optimal point after $\Ocal(d^\alpha/\epsilon^2)$ rounds (as
described later). This allows us to re-phrase our result in the following
manner, which might be conceptually interesting: If the price of bandit
information was $\sqrt{d}$, then to find an $\epsilon$-optimal point, we
would need $d$ times more rounds in the bandit setting, compared to the
full-information setting (e.g. $d/\epsilon^2$ vs. $1/\epsilon^2$). This is
intuitively very appealing: Each round, we get to see only ``$1/d$ as much
information'' in the bandit setting (a single scalar compared to a
$d$-dimensional vector), hence we need $d$ times more rounds to get the same
amount of information and return a point of similar accuracy. Unfortunately,
our results show that there are cases where the price of viewing scalars vs.
$d$-dimensional vectors is \emph{quadratic} in $d$ (e.g. $d^2/\epsilon^2$ vs.
$1/\epsilon^2$). So in some sense, the number of rounds required and the
amount of information per round cannot be traded-off without significant
loss.

A second contribution of our paper lies in disproving another common
intuition: Namely, that any regret lower bounds with respect to a given
domain $\Wcal$ automatically extend to a its convex hull
$\text{conv}(\Wcal)$. For example, this has been implicitly used to argue
that the $\Omega(d\sqrt{T})$ regret lower bound for the boolean hypercube
$\{-1,+1\}^d$ in \cite{audibertminimax11} extends to the convex hypercube
$[-1,+1]^d$ \cite{BuCe12,hazan2013volumetric}). Again, this is generally true
for online linear optimization in the full-information setting, since the
optimal points in $\text{conv}(\Wcal)$ lie in $\Wcal$, and the player can
always simulate playing over $\text{conv}(\Wcal)$ via randomization over
$\Wcal$. However, in the bandit setting the change in domain also changes the
information feedback structure in non-intuitive ways. For example, if the
loss vectors are binary and $\Wcal$ is the corners of the simplex (a.k.a.
multi-armed bandits), there is a well-known $\Omega(\sqrt{dT})$ regret lower
bound \cite{auer2002nonstochastic}. However, when we convexify the domain and
take $\Wcal$ to be the simplex, then it is possible to get
$\Ocal(\sqrt{\log(d)T})$ regret - same as in the full-information setting! In
fact, such a result can be shown to hold for most continuous domains. We
emphasize that these regret upper bounds are achieved only for binary (or
finitely-valued) losses, and in a manner which is of little practical value.
However, they do demonstrate that one has to be careful when extending bandit
optimization results from a finite domain to its convex hull, and that loss
vectors supported on a finite set are not appropriate to prove bandit lower
bounds for continuous domains. In any case, for completeness, we provide
minimax optimal regret bounds for the simplex and the hypercube, using the
techniques we develop here. For the hypercube, this bound is
$\Theta(d\sqrt{T})$, and formally establishes that this is the minimax
optimal regret for bandit linear optimization over convex domains.

\subsection{Lower Bound Techniques and Main Ideas}

In this subsection, we informally sketch the technical ideas used to get our
lower bounds. The various domains we discuss below are sketched in Figure
\ref{fig:domains}. The result on continuous domains and finitely-valued
losses is a bit orthogonal, and described separately in \secref{sec:loss}.

\begin{figure}[t]
\centering
\includegraphics[scale=0.4]{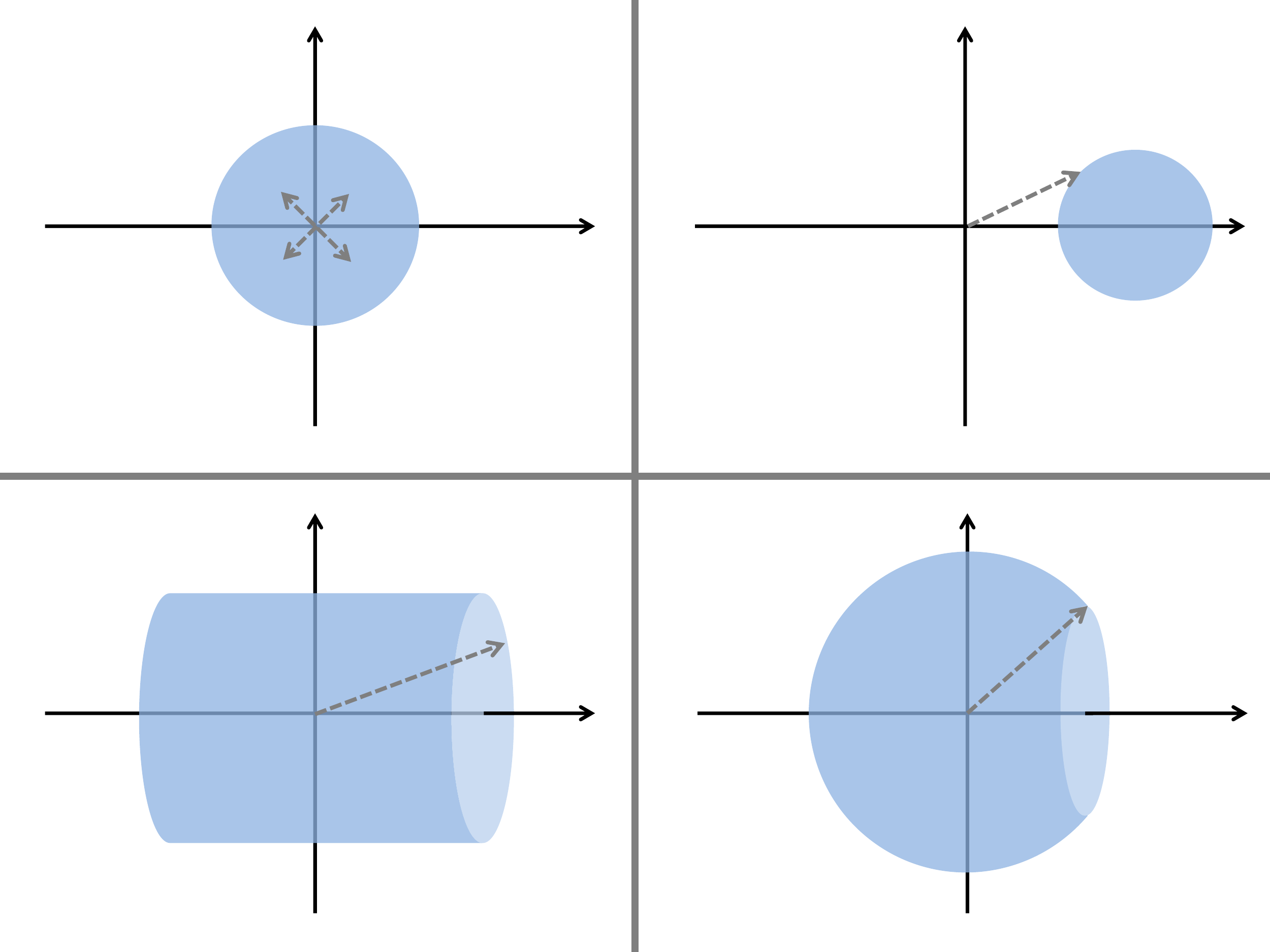}
\caption{Various domains considered in the paper: Origin-centered Euclidean unit ball; Shifted Euclidean unit ball;
cylinder; and capped Euclidean ball. For a domain such as the unit ball, and for minimizing error, we can
get a bounded-variance estimate of the expected loss vector by querying at random from an origin-centered hypercube. However,
this is not possible in the other settings considered here, either because the domain is bounded away from the origin, or because the player
needs to query far and to one side of the origin to attain small regret.}
\label{fig:domains}
\end{figure}

As mentioned earlier, our lower bounds are actually proven for the easier
setting of \emph{stochastic} bandit linear optimization, where the adversary
is constrained to sample each loss vector $\bx_t$ i.i.d. from the same
distribution. Moreover, some of the bounds also apply to the easier goal of
minimizing error rather than regret, which may be of independent interest: In
this case, the player may choose $\bw_1,\ldots,\bw_T$ arbitrarily, observing
$\inner{\bx_1,\bw_1},\ldots,\inner{\bx_T,\bw_T}$, and then needs to return a
vector $\hat{\bw}$ which minimizes the expected error
\begin{equation}\label{eq:error}
\E\left[\inner{\bar{\bx},\hat{\bw}}\right]-\min_{\bw\in\Wcal}\inner{\bar{\bx},\bw}.
\end{equation}
where $\bar{\bx}=\E[\bx]$. This goal is more relevant in a stochastic
optimization setting, where we attempt to optimize a stochastic linear
function based on querying its values $\inner{\bx_t,\bw_t}$ at various
points. It is easy to show that any algorithm, attaining expected regret of
$R$ after $T$ rounds, can attain an $R/T$ error by returning the vector
$\bar{\bw}=\frac{1}{T}\sum_{t=1}^{T}\bw_t$. Thus, any lower bound for
minimizing error in a stochastic setting implies the same lower bound on
minimizing regret and a possible non-stochastic setting, times a $T$ factor.
For example, a $\sqrt{d/T}$ error lower bound implies a $\sqrt{dT}$ regret
lower bound.

For the goal of minimizing error in a stochastic setting, it turns out that
there is a simple strategy attaining $\sqrt{d/T}$ error under appropriate
conditions: In each round, we randomly sample from some hypercube centered
around the origin, and use it to create an unbiased estimate of the loss
vector, with variance $\Ocal(d)$. Repeating this for $T$ rounds and
averaging, we get an estimate $\tilde{\bx}$ of the expected loss vector
$\bar{\bx}$, which is unbiased and with variance $\Ocal(d/T)$. We then return
the point $\arg\min_{\bw\in\Wcal} \inner{\tilde{\bx},\bw}$, leading to an
$\Ocal(\sqrt{d/T})$ error bound.

Unfortunately, this method breaks down if we cannot sample from such an
origin-centered hypercube. In particular, if the variance of
$\inner{\bw,\bx}$ for any $\bw$ in the domain is lower bounded by a constant
(independent of the dimension), then the variance of the estimator becomes
$\Ocal(d^2)$ rather than $\Ocal(d)$. Clearly, this is not true when $\bw$ can
be arbitrarily close to the origin: If $\bw_t=\mathbf{0}$, then the variance
of $\inner{\bw,\bx}$ is always zero. However, it can happen when the domain
is bounded away from the origin. Using this key observation together with
canonical information-theoretic tools (i.e. reduction to a hypothesis testing
problem), we can show a $\Omega(d/\sqrt{T})$ error lower bound for a simple
domain bounded away from the origin, such as a shifted Euclidean ball. Due to
the relationship between error and regret, this leads to an
$\Omega(d\sqrt{T})$ regret lower bound for such domains.

Another way to force $\inner{\bw,\bx}$ to have constant variance, even if the
domain contains the origin, is when the player must pick points far away from
the origin most of the time. This cannot be enforced when the goal is
minimizing error. However, when the goal is minimizing regret, then we can
construct a situation where any point close to the origin leads to a large
regret. In that case, either the player picks points close to the origin, and
gets large regret, or picks points far from the origin, leading to the
variance of $\inner{\bw_t,\bx_t}$ being large. This is a lose-lose situation,
and carefully formalizing it leads to a $\Omega(d\sqrt{T})$ regret lower
bound, similar to the shifted Euclidean ball setting. We formally show this
for a cylindrical domain (see figure \ref{fig:domains}) as well as a
hypercube, but the proof technique appears applicable to other domains, such
as a capped Euclidean ball, or more generally any domain with a flat surface
orthogonal to the origin. In all these cases, the adversary strategy is to
choose an expected loss vector at random from the flat surface, and introduce
a constant amount of stochastic noise in the orthogonal dimension. Since the
flat surface is far from the origin, the player must deal with constant
variance if it wishes its regret to be small.

Although our lower bounds are shown for particular domains, we believe that
the properties we identified -- distance from origin, variance of the losses,
and flatness of parts of the domain boundary -- can play a key role in
characterizing the attainable regret for any given domain.

\section{Preliminaries}\label{sec:setting}

We use bold-faced letters to denote vectors (e.g. $\bx=(x_1,\ldots,x_d)$). We
let $\norm{\cdot}$ denote the Euclidean norm, and $\norm{\cdot}_p$ the
$p$-norm. We also define the function $\norm{\cdot}_*:\reals^d\mapsto
[0,\infty)$ as
\[
\norm{\bx}_{*} = \max_{\bw\in\Wcal}|\inner{\bw,\bx}|.
\]
When $\Wcal$ is a symmetric convex set, then $\norm{\cdot}_{*}$ is the dual
norm to the norm whose unit ball is $\Wcal$, hence the notation. However, we
will use this notation even when $\Wcal$ is not convex and symmetric. We also
define $\Ncal(\ba,\Sigma)$ to be the multivariate Gaussian distribution with
mean $\ba$ and covariance matrix $\Sigma$, and let $I$ denote the identity
matrix.

As discussed in the introduction, our results focus on bandit linear
optimization in a stochastic setting, where the loss vectors are assumed to
be drawn i.i.d. from an unknown distribution $\Dcal$ with mean $\bar{\bx}$.
In this setting, the expected regret (as defined in \eqref{eq:regret}) can be
equivalently written as
\begin{equation}\label{eq:stochregret}
\E\left[\inner{\bar{\bx}~,~\sum_{t=1}^{T}\bw_t}\right]-T\min_{\bw\in\Wcal}\inner{\bar{\bx},\bw}.
\end{equation}
Clearly, if any distribution is allowed, then the inner products above can be
unboundedly large, and no interesting regret bound is possible. To prevent
this, it is standard in the literature to assume that $\inner{\bw,\bx}$ is
essentially bounded. Formally, we assume that $\Dcal$ satisfies the following
two conditions:
\begin{itemize}
  \item $\norm{\bar{\bx}}_{*}\leq 1$
  \item $\Pr_{\bx\sim\Dcal}\left(\norm{\bx}_{*}>z\right)\leq 2\exp(-z^2/2)$
      for all $z\geq 1$.
\end{itemize}
We denote any such distribution as a ``valid'' distribution. The latter
condition requires $\norm{\bx}_{*}$ to have sub-Gaussian tails (or more
precisely, tails dominated by a standard Gaussian random variable), and is a
slight relaxation of the standard `dual' setting (e.g.
\cite{BubCesKa12,audibertminimax11}), where it is assumed that
$\norm{\bx}_{*}\leq 1$ with probability $1$. We choose this purely for
technical convenience, since it allows us to use Gaussian distributions for
$\Dcal$, and does not materially affect algorithmic approaches we're aware
of. Moreover, in terms of lower bounds, we lose almost nothing by this
relaxation: Any lower bound for our setting can be transformed into an
equivalent lower bound in the standard dual setting (where $\norm{\bx}_*\leq
1$ with probability $1$), at the cost of a $\sqrt{\log(T)}$ factor. See
Appendix \ref{app:bounded} for details.

\section{Expected Error}

We begin by considering the attainable performance in terms of expected error
(\eqref{eq:error}), recalling that error lower bounds immediately transfer to
regret lower bounds.

To motivate our results, let us show that it is easy to obtain
$\Ocal(\sqrt{d/T})$ error upper bounds under a mild condition. To do so,
suppose that $\Wcal$ contains the set $\{-\mu,\mu\}^d$ for some $\mu>0$, and
consider the following simple player strategy: In each round $t$, the player
draws a vector $\bsigma\in\{-1,+1\}^d$ uniformly at random, computes
$v=\inner{\bx_t,\mu\bsigma}$ (possible since $\mu\bsigma\in\Wcal$), and
computes the estimator $\tilde{\bx}_t = \frac{1}{\mu}v\bsigma =
\inner{\bx,\bsigma}\bsigma$. It is easy to verify that
\begin{equation}\label{eq:condunb}
\E[\tilde{\bx}_t|\bx_t] = \bx_t~~,~~ \E[\norm{\tilde{\bx}_t}_2^2|\bx_t] = d\norm{\bx_t}_2^2,
\end{equation}
so $\tilde{\bx}_t$ is an unbiased estimator of $\bx_t$ with variance bounded
by $\Ocal(d)$. After $T$ rounds, the player computes
$\tilde{\bx}=\frac{1}{T}\sum_{t=1}^{T}\tilde{\bx}_t$, and returns
$\hat{\bw}=\arg\min_{\bw\in\Wcal}\inner{\tilde{\bx},\bw}$.

\begin{theorem}\label{thm:errorup}
  Suppose that $\Wcal$ is a subset of the unit Euclidean ball $\{\bw:\norm{\bw}_2\leq 1\}$,
  and contains $\{-\mu,\mu\}^d$ for some $\mu>0$. Then for any distribution over loss vectors $\bx$,
  such that $\E[\norm{\bx}_2^2]\leq c^2$ and with mean $\bar{\bx}$, the player strategy described above satisfies
  \[
  \E\left[\inner{\bar{\bx},\hat{\bw}}\right]-\min_{\bw\in\Wcal}\inner{\bar{\bx},\bw} \leq 2c\sqrt{\frac{d}{T}}.
  \]
\end{theorem}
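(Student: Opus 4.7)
The plan is to bound the error by comparing the algorithm's output to the true minimizer via the standard ``optimizer of the estimate'' argument. Let $\bw^\star = \arg\min_{\bw\in\Wcal}\inner{\bar{\bx},\bw}$. First I would write
\[
\inner{\bar{\bx},\hat{\bw}}-\inner{\bar{\bx},\bw^\star}
= \inner{\bar{\bx}-\tilde{\bx},\hat{\bw}}+\inner{\tilde{\bx},\hat{\bw}}-\inner{\bar{\bx},\bw^\star},
\]
and use the fact that $\hat{\bw}$ minimizes $\inner{\tilde{\bx},\cdot}$ over $\Wcal$ to replace $\inner{\tilde{\bx},\hat{\bw}}$ by the larger quantity $\inner{\tilde{\bx},\bw^\star}$. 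This collapses the right-hand side to $\inner{\bar{\bx}-\tilde{\bx},\hat{\bw}-\bw^\star}$.

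Next I would apply Cauchy--Schwarz and the containment $\Wcal\subseteq\{\bw:\norm{\bw}_2\leq 1\}$, so that $\norm{\hat{\bw}-\bw^\star}_2\leq 2$, yielding the pointwise bound $\inner{\bar{\bx},\hat{\bw}}-\inner{\bar{\bx},\bw^\star}\leq 2\norm{\tilde{\bx}-\bar{\bx}}_2$. Taking expectations and applying Jensen's inequality gives
\[
\E\left[\inner{\bar{\bx},\hat{\bw}}\right]-\inner{\bar{\bx},\bw^\star} \leq 2\sqrt{\E\norm{\tilde{\bx}-\bar{\bx}}_2^2}.
\]

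It then remains to control the variance of the averaged estimator $\tilde{\bx}=\frac{1}{T}\sum_t\tilde{\bx}_t$. Since the $\tilde{\bx}_t$ are i.i.d.\ and unbiased for $\bar{\bx}$ by \eqref{eq:condunb}, the coordinate-wise independence of the sum gives $\E\norm{\tilde{\bx}-\bar{\bx}}_2^2=\frac{1}{T}\E\norm{\tilde{\bx}_1-\bar{\bx}}_2^2\leq\frac{1}{T}\E\norm{\tilde{\bx}_1}_2^2$. Conditioning on $\bx_1$ and invoking the second identity of \eqref{eq:condunb} yields $\E\norm{\tilde{\bx}_1}_2^2=d\,\E\norm{\bx_1}_2^2\leq dc^2$. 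Substituting back gives the claimed $2c\sqrt{d/T}$ bound.

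There is no real obstacle here---the argument is completely routine once the estimator's unbiasedness and $\Ocal(d)$ conditional variance (already established in \eqref{eq:condunb}) are in hand. The only subtlety worth stating explicitly is that the ``diameter'' bound $\norm{\hat{\bw}-\bw^\star}_2\leq 2$ uses only the Euclidean-ball containment of $\Wcal$, while the hypothesis $\{-\mu,\mu\}^d\subseteq\Wcal$ is used solely to ensure the sampling scheme is feasible (so that $\mu\bsigma\in\Wcal$ at every round); the value of $\mu$ itself cancels out of the estimator and plays no role in the final constant.
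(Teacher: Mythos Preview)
Your proposal is correct and follows essentially the same approach as the paper: both reduce the error to $2\norm{\tilde{\bx}-\bar{\bx}}_2$ via the optimality of $\hat{\bw}$ for $\tilde{\bx}$ together with Cauchy--Schwarz, then control $\E\norm{\tilde{\bx}-\bar{\bx}}_2^2$ using the i.i.d.\ unbiasedness of the $\tilde{\bx}_t$ and the conditional second-moment identity in \eqref{eq:condunb}. The only cosmetic difference is that the paper bounds $\inner{\bar{\bx}-\tilde{\bx},\hat{\bw}}$ and $\inner{\tilde{\bx}-\bar{\bx},\bw^\star}$ separately (each by $\norm{\bar{\bx}-\tilde{\bx}}_2$ since $\norm{\bw}_2\leq 1$), whereas you first collapse to $\inner{\bar{\bx}-\tilde{\bx},\hat{\bw}-\bw^\star}$ and then use the diameter bound $\norm{\hat{\bw}-\bw^\star}_2\leq 2$; the resulting constant is identical.
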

We note that \cite{BubCesKa12} provides a more sophisticated algorithm with
$\Ocal(\sqrt{dT})$ regret, even against a non-stochastic adversary.

\begin{proof}
The idea of the proof is that since $\{\tilde{\bx}_t\}_{t=1}^{T}$ are
unbiased estimate of $\bx_t$, which are themselves random variables with mean
$\bar{\bx}$, then their average $\tilde{\bx}$ will be a good estimate of
$\bar{\bx}$, hence minimizing $\inner{\tilde{\bx},\bw}$ will approximately
minimize $\inner{\bar{\bx},\bw}$.

More formally, let $\bw^*=\arg\min_{\bw\in\Wcal}\inner{\bar{\bx},\bw}$. Also,
recall that any $\bw\in\Wcal$ satisfies $\norm{\bw}_2\leq 1$, and therefore
$\inner{\bx,\bw}-\inner{\bx',\bw}\leq \norm{\bx-\bx'}_2$ for all
$\bw\in\Wcal$ by the Cauchy-Schwartz inequality. Finally, recall that
$\hat{\bw}=\arg\min_{\bw\in\Wcal}\inner{\tilde{\bx},\bw}$. Using these
observations, we have
\[
\inner{\bar{\bx},\hat{\bw}}
\leq \inner{\tilde{\bx},\hat{\bw}}+\norm{\bar{\bx}-\tilde{\bx}}
\leq \inner{\tilde{\bx},\bw^*}+\norm{\bar{\bx}-\tilde{\bx}}
\leq \inner{\bar{\bx},\bw^*}+2\norm{\bar{\bx}-\tilde{\bx}},
\]
and therefore
\begin{equation}\label{eq:xdiff}
\E\left[\inner{\bar{\bx},\hat{\bw}}\right]-\min_{\bw\in\Wcal}\inner{\bar{\bx},\bw}
~=~
\E\left[\inner{\bar{\bx},\hat{\bw}}\right]-\inner{\bar{\bx},\bw^*}
~\leq~
2\E\left[\norm{\bar{\bx}-\tilde{\bx}}\right]
~\leq~
2\sqrt{\E\left[\norm{\bar{\bx}-\tilde{\bx}}^2\right]},
\end{equation}
where the last step is by Jensen's inequality. Recalling the definition of
$\tilde{\bx}$, this equals
\[
2\sqrt{\E\left[\left(\frac{1}{T}\sum_{t=1}^{T}\left(\tilde{\bx}_t-\bar{\bx}\right)\right)^2\right]}.
\]
According to equation \eqref{eq:condunb}, $\E[\tilde{\bx}_t|\bx_t]=\bx_t$,
but we also have $\E[\bx_t]=\bar{\bx}$, and therefore
$\E[\tilde{\bx}_t-\bar{\bx}]=\bar{\bx}-\bar{\bx}=0$, so each summand in the
equation above is zero-mean. Moreover, they are i.i.d. since each
$\tilde{\bx}_t$ is computed based on the independent realization $\bx_t$.
Therefore, the equation above equals
\[
2\sqrt{\frac{1}{T}\E\left[\left(\tilde{\bx}_1-\bar{\bx}\right)^2\right]}
~\leq~
2\sqrt{\frac{1}{T}\E\left[\norm{\tilde{\bx}_1}^2\right]},
\]
which by \eqref{eq:condunb} is at most
$2\sqrt{\frac{1}{T}d~\E[\norm{\bx_1}_2^2]} \leq 2c\sqrt{d/T}$ as required.
\end{proof}

For example, a special case of the theorem above implies an
$\Ocal(\sqrt{d/T})$ error upper bound for the origin-centered unit Euclidean
ball. The key property used was the ability to query at a random point in
$\{-\mu,+\mu\}^d$ (a scaled origin-centered Boolean hypercube), which allowed
computing an unbiased estimator of each $\bx_t$ with variance $\Ocal(d)$.

This observation leads us to guess that when we cannot query from such a set,
the learning problem may be harder. One case in which querying such a set is
impossible is when $\Wcal$ is convex and bounded away from the origin. In a
full-information learning setting, shifting $\Wcal$ away from the origin
doesn't increase the learning complexity in general. Surprisingly, in the
bandit setting this turns out to make a huge difference: Even for the unit
Euclidean ball, shifting it so it doesn't include the origin is sufficient to
make the attainable error jump from $\sqrt{d/T}$ to $d/\sqrt{T}$:

\begin{theorem}\label{thm:unorigin}
Suppose that $d>1$, and let $\Wcal=\{\ba+\bw:\norm{\bw}_2\leq 1\}$, where
$\ba=(2,0,\ldots,0)\in \reals^d$. Then for any player strategy returning
$\hat{\bw}\in\Wcal$, there exists a valid distribution over loss vectors with
mean $\bar{\bx}$ such that
  \[
  \E[\inner{\bar{\bx},\bw}]-\min_{\bw\in\Wcal}\inner{\bar{\bx},\bw} \geq c\min\left\{1~,~\frac{d-1}{\sqrt{T}}\right\},
  \]
  where $c$ is a positive universal constant.
\end{theorem}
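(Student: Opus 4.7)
The plan is an information-theoretic lower bound via Fano's inequality against a rich parametric family of stochastic adversaries, whose key leverage is that every $\bw\in\Wcal$ has first coordinate $w_1\in[1,3]$, i.e.\ the domain is bounded away from the origin along $\be_1$.

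I would fix a small absolute constant $\sigma>0$, set $\epsilon = c_0\min\{1,(d-1)/\sqrt{T}\}$ for a constant $c_0$ to be chosen below, and pick a packing $\Theta\subset S^{d-2}$ (regarded as the unit sphere in the subspace orthogonal to $\ba$) of cardinality $M=\exp(\Omega(d-1))$ which is simultaneously a $\delta$-packing and $(\delta/2)$-cover for a small absolute constant $\delta$, and whose empirical second-moment matrix $\tfrac{1}{M}\sum_{\btheta\in\Theta}\btheta\btheta^\top$ is close to $\tfrac{1}{d-1}I_{d-1}$ (for instance by symmetrizing a maximal packing via a uniformly random orthogonal rotation, which corresponds to taking $\btheta$ uniform on $S^{d-2}$). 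For each $\btheta\in\Theta$ let $\Dcal_\btheta$ be the law of $\epsilon(0,\btheta)+\xi\be_1$ with $\xi\sim\Ncal(0,\sigma^2)$. Validity of $\Dcal_\btheta$ then reduces to $\norm{\bar\bx_\btheta}_*=\epsilon\le1$ together with a sub-Gaussian tail for $\norm{\bx}_*\le 3|\xi|+\epsilon$, both of which hold for sufficiently small constant $\sigma$.

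Writing any player output as $\hat\bw=\ba+\hat\bu$ with $\norm{\hat\bu}\le1$, one computes $\min_{\bw\in\Wcal}\inner{\bar\bx_\btheta,\bw}=-\epsilon$ and hence error $=\epsilon(1+\inner{\btheta,\hat\bu_{2:d}})$. I would decode $\hat\btheta\in\Theta$ as the packing element closest to $-\hat\bu_{2:d}/\norm{\hat\bu_{2:d}}$ (arbitrary if $\hat\bu_{2:d}=\mathbf{0}$); the packing-cover geometry forces error at least $c_2\epsilon$ whenever $\hat\btheta\ne\btheta$, for an absolute constant $c_2$. Each observation $\inner{\bx_t,\bw_t}$ is Gaussian with variance $\sigma^2 w_{t,1}^2\ge\sigma^2$ (this is precisely where $w_1\ge1$ enters), so the per-round KL between hypotheses $\btheta,\btheta'$ is $(\epsilon\inner{\btheta-\btheta',\bu_{t,2:d}})^2/(2\sigma^2 w_{t,1}^2)$. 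Averaging over independent uniform $\btheta,\btheta'\in\Theta$ and invoking the near-isotropy gives expected per-round KL at most $\Ocal(\epsilon^2\norm{\bu_{t,2:d}}^2/((d-1)\sigma^2))=\Ocal(\epsilon^2/((d-1)\sigma^2))$, hence $I(\btheta;\mathrm{obs})=\Ocal(T\epsilon^2/((d-1)\sigma^2))$. Since $\log M=\Omega(d-1)$, taking $c_0$ small enough makes $I+\log2\le\tfrac12\log M$, so Fano's inequality gives $\Pr(\hat\btheta\ne\btheta)\ge\tfrac12$, and therefore expected error is at least $\tfrac12 c_2\epsilon = c\min\{1,(d-1)/\sqrt T\}$.

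The delicate step is producing the $1/(d-1)$ factor in the averaged KL simultaneously with the exponential packing size $\log M=\Omega(d-1)$. A two-point Le Cam argument misses this factor and yields only $\Omega(1/\sqrt T)$, and a naive axis-aligned family $\{\pm\be_j\}$ of size $2(d-1)$ yields only $\Omega(\sqrt{d\log d/T})$. The sharp $\Omega(d/\sqrt T)$ rate needs both an exponentially large packing (so that $\log M$ can absorb the $\Omega(d)$ information budget demanded by Fano) and approximate isotropy (so that the averaged pairwise KL is smaller than the worst-pair KL by a factor of $d-1$); the symmetrized construction secures both, since any realization of the random rotation yields a valid adversary and the minimax error for the randomized family is at least its Bayes error, computed under the induced uniform prior on $S^{d-2}$.
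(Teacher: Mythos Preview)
Your plan differs from the paper's (which uses an Assouad-type hypercube $\bsigma\in\{-1,+1\}^{d-1}$ with coordinate-wise Pinsker bounds, not Fano on a spherical packing), but more importantly there is a real gap at the mutual-information step.

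The problematic claim is that averaging the pairwise KL over independent uniform $\btheta,\btheta'\in\Theta$ yields $\Ocal(\epsilon^2/((d-1)\sigma^2))$ per round. When you unroll $D_{KL}(P_\btheta\|P_{\btheta'})$ by the chain rule, the expectation over the history $v_{1:t-1}$ is taken under $P_\btheta$, so the query direction $\bu_{t,2:d}$ depends on $\btheta$. Averaging over $\btheta'$ and using near-isotropy does control $\tfrac{1}{M}\sum_{\btheta'}\inner{\btheta',\bu_{t,2:d}}^2\approx\|\bu_{t,2:d}\|^2/(d-1)$, but after expanding $\inner{\btheta-\btheta',\bu}^2$ you are still left with the term
\[
\frac{1}{M}\sum_{\btheta}\E_{P_\btheta}\bigl[\inner{\btheta,\bu_{t,2:d}}^2\bigr],
\]
and here isotropy does \emph{not} apply: $\bu_t$ is adaptive and may correlate with $\btheta$ as the player gathers information, so you cannot swap the $\btheta$-average through $\E_{P_\btheta}$. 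With only the crude pointwise bound $\inner{\btheta,\bu}^2\le 1$, your mutual-information estimate degrades to $\Ocal(T\epsilon^2/\sigma^2)$; Fano then forces $\epsilon=\Ocal(\sqrt{(d-1)/T})$, giving only $\Omega(\sqrt{d/T})$---exactly the $\sqrt{d}$-deficient rate you yourself flag for the naive families.

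The paper's construction sidesteps this entirely. It tests each sign $\sigma_i$ separately (with $\sigma_{-i}$ held fixed), so the per-round KL for coordinate $i$ is $\Ocal(\mu^2 w_{t,i}^2/w_{t,0}^2)\le\Ocal(\mu^2 w_{t,i}^2)$, using $w_{t,0}\ge 1$. Summing these $d-1$ divergences and applying Jensen yields a bound governed by $\sum_i w_{t,i}^2\le 1$, a \emph{pointwise} constraint on every $\bw_t\in\Wcal$ that holds regardless of how adaptively $\bw_t$ was chosen. It is this deterministic $\ell_2$ budget across coordinates---not any prior-isotropy averaging---that converts ``observation variance bounded below'' into the extra factor of $\sqrt{d}$.
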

The formal proof appears in \secref{sec:proofs}, and we note that due to the
rotational symmetry of our linear optimization setting, the same lower bound
would hold for a Euclidean ball shifted in any other direction. Although the
result is specifically for a shifted Euclidean ball, we conjecture that the
result can be generalized to other domains as well, as long as they are
sufficiently large and at a constant distance from the origin.

Although the notion of a domain not containing the origin may appear unusual
at first, it can actually be necessary to model some common situations. One
example is when the domain has a linear constraint bounding it away from the
origin, such as the probability simplex or a convex hull of points lying in
the same quadrant. As another example, suppose our losses actually take the
form $\inner{\bw_t,\bx_t}+\xi_t$ where $\xi_t$ is some noise or other bias
term not controlled by the player. This is possible to model in the bandit
setting, by adding a dimension and playing over the domain $\Wcal\times
\{1\}$, using loss vectors of the form
$(x_{t,1},x_{t,2},\ldots,x_{t,d},\xi_t)$. But this leads to a domain bounded
away from the origin. In fact, if $\Wcal$ is the origin-centered Euclidean
ball, then the techniques of \thmref{thm:unorigin} readily imply a
$\Omega(d/\sqrt{T})$ error lower bound for the domain $\Wcal\times \{1\}$.

As discussed earlier, the key idea in proving \thmref{thm:unorigin} is that
we can construct a distribution where the variance of $\inner{\bw,\bx}$ for
any $\bw\in\Wcal$ is lower bounded by a positive constant, independent of the
norm of $\bw$ or the dimension. The effect of this is best seen in the
estimation procedure described before \eqref{eq:condunb}: Recall that there
we chose a query point $\bw_t=\mu\bsigma$ where $\bsigma\in \{-1,+1\}^d$, in
which case the conditional second moment of the estimator
$\tilde{\bx}_t=\frac{1}{\mu}v\bsigma$, where $v=\inner{\bx_t,\bw_t}$
,satisfies
\begin{equation}\label{eq:dev}
\E[\norm{\tilde{\bx}_t}_2^2|\bx_t] = \frac{1}{\mu^2}\E[v^2\norm{\bsigma}_2^2]
= \frac{d~\E[v^2]}{\mu^2}.
\end{equation}
Since $v=\inner{\bx_t,\bw_t}=\mu\inner{\bx_t,\bsigma}$, we see that $\E[v^2]$
scales with $\mu^2$, and therefore \eqref{eq:dev} is $\Ocal(d)$, independent
of $\mu$. In contrast, if $\E[v^2]=\E[\inner{\bx_t,\bw_t}^2]$ was forced to
have constant positive variance, then \eqref{eq:dev} is at least
$\Omega(d/\mu^2)$. Moreover, if $\Wcal$ has bounded norm, then $\mu\leq
\Ocal(1/\sqrt{d})$, so \eqref{eq:dev} would scale as $\Ocal(d^2)$ rather than
$\Ocal(d)$, eventually leading to an error bound scaling as $d/\sqrt{T}$
rather than $\sqrt{d/T}$.

\section{From Error to Regret}

Having considered domains bounded away from the origin, it is natural to ask
whether this is the only condition leading to $\Omega(d\sqrt{T})$ regret
lower bounds. If the domain does contain the origin, then because of
\thmref{thm:errorup}, it seems unlikely to show such lower bounds by proving
error lower bounds. However, we can exploit the fact that attaining small
regret is harder than attaining small error. In this section, we show how in
fact it can be strictly harder: We identify a situation which leads to
$\Omega(d\sqrt{T})$ regret lower bounds, even if the domain contains the
origin, and even though better error upper bounds are possible.

Specifically, the following theorem demonstrates such a lower bound for a
domain consisting of an origin-centered cylinder.
\begin{theorem}\label{thm:regretorigin}
Suppose that $d>1$, and let $\Wcal=[-1,1]\times
\{\bw\in\reals^{d-1}:\norm{\bw}_2\leq 1\}$. Then for any player strategy,
there exists a valid distribution over loss vectors with mean $\bar{\bx}$
such that
  \[
  \E\left[\sum_{t=1}^{T}\inner{\bar{\bx},\bw_t}\right]-\min_{\bw\in\Wcal}\sum_{t=1}^{T}\inner{\bar{\bx},\bw} \geq c (d-1)\sqrt{T}
  \]
  for any $T\geq d^4/16$, where $c$ is a positive universal constant.
\end{theorem}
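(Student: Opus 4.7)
The plan is to construct a stochastic adversary that forces the player into a constant-variance observation regime (via the ``flat'' first coordinate) while burying a $(d-1)$-dimensional Rademacher parameter in the $B_2^{d-1}$ part of the cylinder. Draw $\bsigma$ uniformly from $\{-1,+1\}^{d-1}$ and set $\bar{\bx}=(-a,\epsilon\bsigma)$ with $a=\tfrac12$ and $\epsilon=c_0\sqrt{(d-1)/T}$ for a small constant $c_0$; samples are $\bx_t=(-a+\xi_t,\epsilon\bsigma+\bn_t)$ where $\xi_t\sim\Ncal(0,1)$ and $\bn_t\sim\Ncal(0,\tau^2 I/(d-1))$ are independent and $\tau$ is a small constant. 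The threshold $T\ge d^4/16$ ensures $\epsilon\sqrt{d-1}\le\tfrac12$, so $\norm{\bar{\bx}}_*\le 1$, and the $1/(d-1)$ scaling of $\bn_t$'s covariance keeps $\norm{\bx_t}_*$ (computed in the cylinder's dual norm $\norm{(x_1,\bx^{(2:d)})}_*=|x_1|+\norm{\bx^{(2:d)}}_2$) sub-Gaussian. The unique optimum $\bw^{\ast}=(1,-\bsigma/\sqrt{d-1})$ splits the per-round regret additively as $R_t=a(1-w_{t,1})+\epsilon(\sqrt{d-1}+\inner{\bw_t^{(2:d)},\bsigma})$, each summand non-negative.

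I would argue by contradiction: assume $\E[\sum_t R_t]<c(d-1)\sqrt{T}$ for a small constant $c$. From the first-coordinate summand, Markov shows that the expected number of ``bad'' rounds (with $w_{t,1}<\tfrac12$) is at most $O((d-1)\sqrt{T})$, and on the complementary ``good'' rounds the observation $y_t$ is Gaussian with variance $\ge\tfrac14$. Flipping $\sigma_i$ shifts the mean of $y_t$ by $2\epsilon w_{t,i+1}$, so the per-round KL is at most $8\epsilon^2 w_{t,i+1}^2$ on good rounds; on bad rounds the floor variance $\tau^2\norm{\bw_t^{(2:d)}}_2^2/(d-1)$ from $\bn_t$ caps the per-round KL summed over $i$ by $2\epsilon^2(d-1)/\tau^2$. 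Summing over all rounds gives good contribution $O(T\epsilon^2)=O(d-1)$ and bad contribution $O((d-1)\sqrt{T}\cdot\epsilon^2(d-1))=O((d-1)^3/\sqrt{T})$, which is also $O(d-1)$ exactly when $T\ge (d-1)^4$---this is the role of the threshold $T\ge d^4/16$. A Fano/Assouad step then yields $I(\bsigma;y_1,\ldots,y_T)\ll d-1$, so $\bsigma$ cannot be reliably identified.

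Finally I would convert this inability to identify $\bsigma$ into a lower bound on $\sum_t R_t^{(2)}$ through the identity $R_t^{(2)}=\epsilon\sqrt{d-1}(1-r_t\cos\theta_t)$, where $r_t=\norm{\bw_t^{(2:d)}}_2\le 1$ and $\theta_t$ is the angle between $\bw_t^{(2:d)}$ and the unknown optimal direction $-\bsigma/\sqrt{d-1}$. Since $\bsigma$ is essentially unrecoverable from the observations, a covering argument on the unit sphere in $\reals^{d-1}$ shows $\E[r_t\cos\theta_t]$ is bounded away from $1$ by a positive constant, giving $\E[R_t^{(2)}]\ge c'\epsilon\sqrt{d-1}$; summing yields $\E[\sum_t R_t]\ge c'T\epsilon\sqrt{d-1}=\Theta((d-1)\sqrt{T})$, contradicting the assumption. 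The main obstacle I expect is the joint control of good and bad rounds in the mutual-information computation: without the floor noise $\bn_t$, a player could identify $\bsigma$ exactly from just $d-1$ noiseless axis-aligned queries with $w_{t,1}=0$ (paying only $O(d)$ total first-coordinate regret), so the careful choice of a $\tau^2/(d-1)$-scale covariance for $\bn_t$---small enough to preserve sub-Gaussian validity but large enough to cap the bad-round KL---is essential and tightly coupled with the condition $T\ge d^4/16$ that closes the contradiction.
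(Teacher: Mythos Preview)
Your construction is essentially identical to the paper's: a Rademacher vector $\bsigma$ hidden in the $(d-1)$-dimensional ball part of the cylinder at scale $\Theta(\sqrt{d/T})$, constant Gaussian noise on the first coordinate with mean $-a$, and a $\Theta(1/d)$-variance ``floor'' noise on the remaining coordinates. You also correctly identify the role of the floor noise and of the threshold $T\ge d^4/16$.

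The execution differs. You argue by contradiction, split rounds into ``good'' ($w_{t,1}\ge 1/2$) and ``bad'' via Markov on the first-coordinate regret, and bound the information separately on each class. The paper instead derives a direct lower bound of the form
\[
\frac14\sum_{t}\E[1-w_{t,0}]+\frac{\mu\sqrt{d}T}{4}\Bigl(1-8\mu\sqrt{\tfrac1d\sum_t\E\Bigl[\tfrac{1}{w_{t,0}^2+1/d}\Bigr]}\Bigr),
\]
and then applies the elementary inequality $\frac{1}{w^2+1/d}\le d(1-|w|)+1$ (their Lemma~1) to couple the two terms, reducing everything to minimizing a convex function of $z=\sum_t\E[1-|w_{t,0}|]$. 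This replaces your good/bad dichotomy with a single algebraic step and gives explicit constants; your route is more conceptual but requires tracking constants through the contradiction and being careful that the ``expected number of bad rounds'' bound, which is averaged over $\bsigma$, still controls the conditional KL for each coordinate flip (the paper handles this by bounding KL in both directions and averaging).

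Your final step---``a covering argument on the unit sphere shows $\E[r_t\cos\theta_t]$ is bounded away from $1$''---is the weakest link as stated. A sphere covering is not really the right tool here: the hidden parameter is a hypercube vertex, and what you actually need is a coordinate-wise Assouad argument showing that for each $i$, $\Pr(\sigma_i\hat w_i\ge 0)$ is bounded below, which is exactly what the paper does. Your Fano bound $I(\bsigma;y_{1:T})=O(d-1)$ is the right input for this, but you should make the conversion explicit via coordinate flips rather than invoke a covering.
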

Note that the bound is a bit weaker than \thmref{thm:unorigin}, in that it
only holds for sufficiently large $T$. However, since this is a lower bound,
it is still sufficient for proving that no algorithm will attain
$o(d\sqrt{T})$ regret in general here.

The proof relies on the following construction: The adversary chooses a
vector $\bsigma\in \{-1,+1\}^d$ uniformly at random, and constructs a loss
vector distribution, whose expectation $\bar{\bx}$ is
$\left(-\frac{1}{4},\mu\sigma_1,\ldots,\mu\sigma_d\right)$ (with $\mu$ being
a small scaling factor, on the order of $\sqrt{d/T}$ if $T$ is sufficiently
large), and where the first coordinate has constant Gaussian noise. It is
possible to show that to get small regret, the player essentially needs to
identify $\sigma_1,\ldots,\sigma_d$. This would have been possible if the
player queried at points $\bw_t$ whose first coordinate is $0$. However,
since $\bar{\bx}$ has a large negative weight on the first coordinate, then
$w_{t,1}$ must be large to get small regret. But because of the stochastic
noise in the first coordinate, this means that $\inner{\bw_t,\bx_t}$ will
have large variance. A formal analysis of this leads to a regret lower bound
of the form
\[
\Omega\left(\sum_{t=1}^{T}\E[1-w_{t,1}]+\mu\sqrt{d}T\left(1-\mu\sqrt{\frac{1}{d}\sum_{t=1}^{T}\E\left[\frac{1}{w_{t,1}^2+\frac{1}{d}}\right]}\right)\right).
\]
The idea now is that no matter how the first coordinates
$\{w_{t,1}\}_{t=1}^{T}$ are chosen by the player, the regret will be large:
If they are significantly smaller than $1$, then the first term above will be
on the order of $T$. But if $w_{t,1}$ is almost $1$, then the square root
term will be small, again leading to a large regret. A careful analysis shows
that the regret will always be $\Omega(d\sqrt{T})$.

Although a precise characterization is non-trivial, the proof technique
appears to be potentially applicable to any domain with a flat surface, which
is sufficiently large to contain the points
$\left(-a,\mu\sigma_1,\ldots,\mu\sigma_d\right)$ for some constant $a\in
(0,1)$ and suitable scaling factor $\mu$ (possibly after rotating the domain
around the origin). For example, the same proof technique would apply to a
capped Euclidean ball, $\Wcal=\left\{\bw:\norm{\bw}\leq 1, w_1\leq c\right\}$
for some constant $c\in (0,1)$. The flatness of the surface seems to be
important. To see this, let us understand why the proof breaks down, when
instead of a cylinder or a capped ball, our domain is the origin-centered
Euclidean unit ball (for which we know that we can get $\Ocal(\sqrt{dT})$
regret \cite{BubCesKa12}). First, when $T$ is sufficiently large compared to
$d$, we choose $\mu$ on the order of $\sqrt{d/T})$ (this is the regime which
makes detecting $\sigma_1,\ldots,\sigma_d$ information-theoretically hard).
In that case, we have
$\bar{\bx}=\left(-a,\sqrt{\frac{d}{T}}\sigma_1,\ldots,\sqrt{\frac{d}{T}}
\sigma_d\right)$, and the optimal play is
$\bw^*=-\frac{1}{\norm{\bar{\bx}}}\bar{\bx}=
\frac{1}{\sqrt{a^2+\frac{d^2}{T}}}
\left(a,-\sqrt{\frac{d}{T}}\sigma_1,\ldots,-\sqrt{\frac{d}{T}}\sigma_d\right)$.
However, with an origin-centered ball, the player doesn't need to detect
$\sigma_1,\ldots,\sigma_d$ to get small regret: The player can go ``further''
along the first coordinate, and just play the fixed point $\bw =
(1,0,\ldots,0)$ every round. The total expected regret is then
\begin{align*}
&T\left(\inner{\bar{\bx},\bw}-\inner{\bar{\bx},\bw^*}\right)
~=~ T\left(\bar{x}_1+\frac{1}{\norm{\bar{\bx}}}\norm{\bar{\bx}}^2\right)
~=~ T\left(\bar{x}_1+\norm{\bar{\bx}}\right)\\
&=~ T\left(-a+\sqrt{a^2+\frac{d^2}{T}}\right)
~\leq~ T\left(-a+a+\frac{1}{2a}\frac{d^2}{T}\right) ~=~ \frac{1}{2a}d^2 ~=~ \Ocal(d^2),
\end{align*}
where we used the fact (immediate by Taylor expansion) that $\sqrt{a^2+x}\leq
a+\frac{1}{2a}x$ for all $x\geq 0$. So, we see that instead of
$\Omega(d\sqrt{T})$ regret as in the case of a cylinder or capped ball, here
the player can achieve a much smaller $\Ocal(d^2)$ regret (assuming $T$ is
sufficiently large). In fact, this $\Ocal(d^2)$ regret bound is tight for our
construction -- carrying through the lower bound derivation as in
\thmref{thm:regretorigin}, but for the origin-centered ball, yields an
$\Omega(d^2)$ regret lower bound for sufficiently large $T$.

\section{Loss Vectors from a Finite Set}\label{sec:loss}

Many of the regret lower bounds shown in the literature for finite domains
(such as for the corners of the simplex and for the boolean hypercube
\cite{auer2002nonstochastic,dani2007price,audibertminimax11}) use loss
vectors from a finite set (e.g. where each entry is binary). Based on
analogues to the full information setting, it is often argued that these
lower bound constructions also extend to the (continuous) convex hull of
these domains. In this section, we point out that these analogues can be
dangerous and generally do not hold.

In particular, the theorem below demonstrates a possibly surprising fact: If
we perform bandit linear optimization over a continuous domain, and the loss
vectors come from a finite set, then we can actually get the same regret as
in the full information case -- generally much smaller than the lower bound
one would hope to achieve. This also holds for non-stochastic adversaries. We
hasten to emphasize that the way we achieve this is ``cheating'' and not very
useful in practice, since it heavily relies on the ability to perform
arbitrary-precision computations. Nevertheless, it demonstrates that one has
to be careful when extending bandit optimization results from a finite domain
to its convex hull. Moreover, it shows that loss vectors supported on a
finite set are not suitable to prove bandit lower bounds for continuous
domains.

For simplicity, we will prove the result for the simplex and for
binary-valued loss vectors, but from the proof it is easily seen to be
extendable to other continuous domains in general (such as the hypercube
$[-1,+1]^d$), as well as the loss vectors coming from any finite set.

\begin{theorem}
Suppose $\Wcal=\{\bw:\sum_i w_i=1, \forall i~w_i\geq 0\}$ is the probability
simplex in $\reals^d$, and suppose the loss vectors are from the set
$\{0,1\}^d$. Then there exist a deterministic player strategy in the bandit
setting, such that for any adversarial strategy for choosing loss vectors
$\bx_1,\ldots,\bx_T$,
\[
\sum_{t=1}^{T}\inner{\bx_t,\bw_t}-\min_{\bw\in\Wcal}\sum_{t=1}^{T}\inner{\bx_t,\bw}\leq
\Ocal\left(\sqrt{\log(d)T}\right)
\]
\end{theorem}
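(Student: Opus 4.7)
The plan is to combine a standard full-information no-regret algorithm on the simplex with an \emph{encoding} trick that exploits the finiteness of the loss vector set: a single real-valued observation $\inner{\bx_t,\bw_t}$ can fully determine $\bx_t\in\{0,1\}^d$ provided the coordinates $w_{t,1},\dots,w_{t,d}$ satisfy the \emph{distinct subset sums} property, i.e. the $2^d$ numbers $\{\sum_{i\in S}w_{t,i}:S\subseteq[d]\}$ are all distinct. In that case the map $\bx\mapsto\inner{\bx,\bw_t}$ is injective on $\{0,1\}^d$, so $\bx_t$ can be recovered from $v_t=\inner{\bx_t,\bw_t}$ by searching the $2^d$ candidates (this is the ``cheating'' part: it uses real-valued arithmetic and exponential-time decoding).

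First, I would maintain the Hedge (multiplicative-weights) distribution $\bp_t$ on $\{1,\dots,d\}$ with step size $\sim\sqrt{\log(d)/T}$, which, \emph{if fed the true loss vectors $\bx_t$}, guarantees $\sum_{t=1}^T\inner{\bx_t,\bp_t}-\min_{\bw\in\Wcal}\sum_{t=1}^T\inner{\bx_t,\bw}=\Ocal(\sqrt{T\log d})$. Rather than playing $\bp_t$ directly, I would play the perturbation $\bw_t=(1-\eta)\bp_t+\eta\bu$, where $\bu$ is a fixed ``encoding'' vector in the relative interior of the simplex with distinct subset sums (e.g.\ $u_i\propto M^{-i}$ for $M$ large, normalized), and $\eta>0$ is taken exceedingly small. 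From the observed $v_t$ I decode $\bx_t$ exactly, feed it to Hedge, and repeat. The resulting strategy is deterministic.

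The regret bound then follows from a two-line decomposition:
\begin{align*}
\sum_{t=1}^T\inner{\bx_t,\bw_t}-\min_{\bw\in\Wcal}\sum_{t=1}^T\inner{\bx_t,\bw}
&= \sum_{t=1}^T\inner{\bx_t,\bp_t}-\min_{\bw\in\Wcal}\sum_{t=1}^T\inner{\bx_t,\bw} \\
&\quad + \eta\sum_{t=1}^T\bigl(\inner{\bx_t,\bu}-\inner{\bx_t,\bp_t}\bigr) \\
&\leq \Ocal\!\left(\sqrt{T\log d}\right)+2\eta T,
\end{align*}
since the loss vectors are in $[0,1]^d$ and both $\bu,\bp_t$ lie in the simplex. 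Taking $\eta\leq 1/\sqrt{T}$ (or far smaller) absorbs the second term and gives the claimed bound.

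The main obstacle is guaranteeing that $\bw_t$ has the distinct-subset-sums property at every round, because $\bp_t$ depends on the (adversarially driven) history and can in principle collude with $\eta\bu$ to create collisions. For each $t$, the set of ``bad'' values of $\eta$ where some pair of subset sums of $\bw_t$ coincide is finite (at most $\binom{2^d}{2}$ linear equations in $\eta$), so one can choose $\eta$ adaptively at each round to avoid them while keeping $\eta$ arbitrarily small; this is where arbitrary-precision arithmetic is essential. Alternatively, choose $\eta$ to be a transcendental like $\pi/T^{100}$ once and for all, and verify that this avoids every algebraic collision with a rational-coordinate $\bp_t$ (which holds by a transcendence/genericity argument). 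Either way the extra regret is negligible, yielding the $\Ocal(\sqrt{T\log d})$ bound.
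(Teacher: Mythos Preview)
Your approach is essentially the paper's: run a full-information algorithm (Hedge) and play a tiny perturbation of its output so that the single scalar observation $\inner{\bx_t,\bw_t}$ encodes $\bx_t\in\{0,1\}^d$ exactly, then feed the decoded $\bx_t$ back to Hedge. The difference is only in how the perturbation is engineered. You mix in a fixed encoding vector $\bu$ via $\bw_t=(1-\eta)\bp_t+\eta\bu$ and then must argue that the distinct-subset-sums property survives the mixture, which forces either an adaptive choice of $\eta_t$ each round or a genericity argument. The paper instead \emph{clips} $\hat\bw_t$ to $p=\lceil\log_{10}T\rceil$ decimal digits, adds the encoding vector $(10^{-p-1},\dots,10^{-p-d})$ in fresh digit positions, and renormalizes to the simplex; this guarantees collision-free decoding by construction (multiply the observed loss by the known normalizer and read off digits $p{+}1,\dots,p{+}d$) and bounds the extra regret by $2$ with no case analysis and polynomial-time decoding.

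Your adaptive-$\eta$ fix is sound: for fixed $\bp_t$ and $\bu$ with distinct subset sums, each pair $S\neq S'$ rules out at most one value of $\eta$, so a suitable $\eta_t\in(0,1/T)$ always exists and the strategy remains deterministic. Your transcendence alternative, however, is not justified as stated: Hedge's outputs $p_{t,i}\propto\exp(-\lambda L_{t,i})$ are themselves transcendental in general, so the premise ``$\eta$ transcendental versus $\bp_t$ rational'' does not hold. Stick with the adaptive choice, or adopt the paper's clipping trick, which sidesteps the collision issue entirely.
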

\begin{proof}
  The proof uses the following observation: In the setting described
  above, it is possible for the player to determine $\bx_t$ precisely by
  perturbing its chosen point by an arbitrarily small amount, and feed it into a deterministic full-information algorithm
  for this domain, such as hedge \cite{FrSc97,CesLu06}. Such algorithms attain
  $\Ocal(\sqrt{\log(d)T})$ regret, from which the result follows.
  
  More precisely, suppose that at the beginning of round $t$, the full-information algorithm
  determines that one should play point $\hat{\bw}_t$. Let $p=\lceil\log_{10}(T)\rceil$, and define
  \[
  \bw'_t = \text{clip}_p(\hat{\bw}_t)+(10^{-p-1},10^{-p-2},\ldots,10^{-p-d}),
  \]
  where $\text{clip}_p(\bw)$ clips every entry of $\bw$ (in its decimal representation) to $p$ places after
  the decimal point. Note that for any $\bx\in\{0,1\}^d$, if we get
  $\inner{\bx,\bw'_t}=\sum_{i=1}^{d}x_i w'_{t,i}$, then we can determine
  $\bx$ precisely: We just need to look at the digits in locations $p+1,p+2,\ldots,p+d$ after the decimal point.
  A small technical issue is that $\bw'_t$ does not lie in the simplex $\Wcal$, so it cannot be chosen by the player.
  To fix this, the player picks $\bw_t= \frac{1}{\norm{\bw'_t}_1}\bw'_t$, which
  indeed lies in the simplex. Given the loss $\inner{\bx_t,\bw_t}$, we just
  multiply it by the known quantity $\norm{\bw'_t}_1$ to get
  $\inner{\bx_t,\bw'_t}$, from which we can read off $\bx_t$ and feed it back to
  the full-information algorithm.
  
  By standard regret guarantees (e.g. \cite{FrSc97,CesLu06})\footnote{Strictly speaking, these
  algorithms are phrased so as to minimize $\max_{\bw\in\Wcal}\sum_{t=1}^{T}\inner{\bx_t,\bw}-\sum_{t=1}^{T}\inner{\bx_t,\bw_t}$, but
  they can be easily converted to our notion of regret by feeding them with $\mathbf{1}-\bx_t$ where $\mathbf{1}$ is the all-ones vector.}, we therefore get that
  \begin{equation}\label{eq:regrethedge}
  \sum_{t=1}^{T}\inner{\bx_t,\hat{\bw}_t}-\min_{\bw\in\Wcal}\sum_{t=1}^{T}\inner{\bx_t,\bw}\leq \Ocal\left(\sqrt{\log(d)T}\right)
  \end{equation}
  To get a regret bound using the actual plays
  $\bw_t$, we note that we can lower bound the left hand side of \eqref{eq:regrethedge} by
  \begin{equation}\label{eq:perturb}
  \sum_{t=1}^{T}\inner{\bx_t,\bw_t}-\min_{\bw\in\Wcal}\sum_{t=1}^{T}\inner{\bx_t,\bw}-
  \sum_{t=1}^{T}|\inner{\bx_t,\hat{\bw}_t-\bw_t}|,
  \end{equation}
  and that
  \begin{align*}
  |\inner{\bx_t,\hat{\bw}_t-\bw_t}| &~\leq~ \norm{\hat{\bw}_t-\bw_t}_1
  ~=~ \norm{\frac{1}{\norm{\bw'_t}_1}\bw'_t-\bw_t}
  ~\leq~ \norm{\bw'_t-\bw_t}_1+\norm{\bw'_t-\frac{1}{\norm{\bw'_t}_1}\bw'_t}_1\\
  &~=~\norm{\bw'_t-\bw_t}_1+\left|1-\frac{1}{\norm{\bw'_t}_1}\right|\norm{\bw'_t}_1
  ~=~\norm{\bw'_t-\bw_t}_1+\left|\norm{\bw'_t}_1-1\right|.
  \end{align*}
  By definition of $\bw'_t$ and the fact that $\bw_t$ is on the simplex,
  it is easy to verify that $\norm{\bw'_t}_1\in \left[1-\frac{1}{T},1+\frac{1}{T}\right]$ and that
  $\norm{\bw'_t-\bw_t}_1\leq \frac{1}{T}$, so the above equals at most $2/T$,
  and we can lower bound \eqref{eq:perturb} by
  \[
  \sum_{t=1}^{T}\inner{\bx_t,\hat{\bw}_t}-\min_{\bw\in\Wcal}\sum_{t=1}^{T}\inner{\bx_t,\bw}-2
  \]
  Combining this with \eqref{eq:regrethedge}, we get that
  \[
\sum_{t=1}^{T}\inner{\bx_t,\bw_t}-\min_{\bw\in\Wcal}\sum_{t=1}^{T}\inner{\bx_t,\bw}\leq \Ocal(\sqrt{\log(d)T})+2 = \Ocal(\sqrt{\log(d)T}).
  \]
  as required.
\end{proof}

In light of this type of result, we are not aware of explicit minimax regret
bounds for the simplex (or unit $1$-norm ball) and the hypercube in the
literature. For completeness, we provide such bounds below. Besides relying
on continuous-valued rewards, the lower bounds require the techniques of
\thmref{thm:unorigin} and \thmref{thm:regretorigin}: For the simplex, we
provide an error lower bound relying on the fact that the simplex is bounded
away from the origin; Whereas for the hypercube, we rely on it having a flat
surface orthogonal to the origin. The constructions we use are slightly
different though, due to the different shapes of the domains.

\begin{theorem}\label{thm:simplexunorigin}
Suppose that $d>1$, and let $\Wcal$ be either the simplex $\{\bw:\forall
i~w_i\geq 0~,~ \norm{\bw}_1=1\}$ or the unit $1$-norm ball
$\{\bw:\norm{\bw}_1\leq 1\}$. Then for any player strategy returning some
$\hat{\bw}\in \Wcal$, there exists a valid distribution over loss vectors
with mean $\bar{\bx}$ such that
  \[
  \E[\inner{\bar{\bx},\bw}]-\min_{\bw\in\Wcal}\inner{\bar{\bx},\bw} \geq c\min\left\{1~,~\sqrt{\frac{d}{T}}\right\},
  \]
  where $c$ is a positive universal constant.
\end{theorem}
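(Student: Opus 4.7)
I would prove the lower bound by a standard information-theoretic reduction to hypothesis testing, in the spirit of the multi-armed bandit lower bound of Auer et al., adapted to continuous plays on the simplex. Fix a parameter $\mu > 0$ to be chosen later (on the order of $\sqrt{d/T}$), and for each $i \in \{1,\ldots,d\}$ construct a valid distribution $\Dcal_i$ with mean $\bar{\bx}^{(i)} = \tfrac{1}{2}\mathbf{1} - \mu\be_i$ for the simplex (and $\bar{\bx}^{(i)} = -\mu \be_i$ for the unit $1$-norm ball). Under $\Dcal_i$, the unique minimizer is $\bw^{*,i} = \be_i$ with optimal inner product $\tfrac{1}{2} - \mu$, so the expected error of any returned $\hat{\bw}\in\Wcal$ equals $\mu(1-\hat w_i)$ (for the simplex; an analogous identity holds for the $1$-norm ball after a sign choice).

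The core step is a change-of-measure argument. Let $\Dcal_0$ be the ``null'' distribution with mean $\tfrac{1}{2}\mathbf{1}$ (resp. $\mathbf{0}$) and identical noise structure. By Yao's minimax principle, it suffices to work with a uniform prior on $i\in[d]$ and lower bound $\frac{\mu}{d}\sum_i(1 - \E_{\Dcal_i}[\hat w_i])$. Pinsker's inequality together with Cauchy--Schwartz gives
\[
\sum_i \E_{\Dcal_i}[\hat w_i] \;\leq\; \sum_i \E_{\Dcal_0}[\hat w_i] + \sum_i \sqrt{\tfrac{1}{2}\mathrm{KL}(\Dcal_0^T\|\Dcal_i^T)} \;\leq\; 1 + \sqrt{\tfrac{d}{2}\sum_i \mathrm{KL}(\Dcal_0^T\|\Dcal_i^T)},
\]
where the first term used $\sum_i\hat w_i \leq 1$. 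Using the chain rule for KL and a Gaussian computation, the per-round contribution to $\mathrm{KL}(\Dcal_0^T\|\Dcal_i^T)$ is of the form $\Ocal(\mu^2 w_{t,i}^2/\sigma^2)$, and summing over $i$ kills the $i$ dependence via $\sum_i w_{t,i}^2 \leq \norm{\bw_t}_2^2 \leq 1$, yielding $\sum_i \mathrm{KL}(\Dcal_0^T\|\Dcal_i^T) = \Ocal(T\mu^2/\sigma^2)$. Plugging in $\mu \asymp \sigma\sqrt{d/T}$ gives $\sum_i \E_{\Dcal_i}[\hat w_i] \leq 1 + d/4 \leq d/2$, so the average error is at least $\mu/2 \gtrsim \sqrt{d/T}$, and Yao's principle produces a specific hard $i$.

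The main technical obstacle is the validity condition, which requires $\norm{\bx}_{*} = \max_j |x_j|$ to be sub-Gaussian with parameter~$1$. A naive construction using isotropic Gaussian noise on all coordinates only achieves sub-Gaussian tails after rescaling the variance by $1/\sqrt{\log d}$, which would cost an extra $\sqrt{\log d}$ factor in the final bound. To avoid this, I would add noise only in a single direction tied to the hypothesis, e.g.\ $\bx = \bar{\bx}^{(i)} + \xi\be_i$ with $\xi\sim\Ncal(0,\sigma^2)$ for a small constant $\sigma$ (e.g.\ $\sigma = 1/2$), so that only one coordinate is perturbed and the validity check reduces to a one-dimensional Gaussian tail bound. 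The KL between two such measures with different $i$ then requires a slightly more careful computation because both mean and covariance differ across hypotheses, but the dominating term remains $\Ocal(\mu^2 w_{t,i}^2/\sigma^2)$ and the analysis goes through.

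Finally, the $\min\{1,\cdot\}$ form of the bound is cosmetic: for $T < d$ we can fall back on the fact that the error is trivially bounded below by a positive constant by choosing $\mu = \Omega(1)$ and observing that a single round of observation provides essentially no information about which of $d$ candidate distributions is active. For the unit $1$-norm ball the same argument applies, with the minor modification that $\hat{\bw}$ can have negative entries; one can either enlarge the hypothesis class to $\pm\mu\be_i$ ($2d$ hypotheses) or argue directly that the reduction to the simplex sub-face of the ball is without loss.
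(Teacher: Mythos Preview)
Your overall skeleton---Yao's principle, a uniform prior over $d$ ``spiked'' Gaussian distributions $\Dcal_i$, comparison to a null $\Dcal_0$, Pinsker plus the KL chain rule, and the cancellation $\sum_i w_{t,i}^2/\norm{\bw_t}_2^2=1$---is exactly the paper's proof. The paper uses isotropic noise $\Ncal(-\mu\be_J,\tfrac14 I)$ and gets the KL contribution $2\mu^2 w_{t,j}^2/\norm{\bw_t}_2^2$ per round, then sums over $j$ and $t$ to reach $2\mu\sqrt{dT}$; your sketch reaches the same quantity by the same route.

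The gap is in your proposed fix for validity. You are right to flag that isotropic $\Ncal(0,\sigma^2 I)$ noise makes $\norm{\bx}_*=\norm{\bx}_\infty$ scale like $\sigma\sqrt{\log d}$; the paper in fact does use isotropic noise and is arguably loose on this point. But your remedy---putting noise only along $\be_i$ under hypothesis $i$---breaks the argument in a fundamental way. First, there is no coherent choice for the null $\Dcal_0$: if it is noiseless, the law of the observed losses under $\Dcal_0$ is a point mass, hence singular with respect to the continuous law under $\Dcal_i$, and $\mathrm{KL}(\Dcal_0^T\|\Dcal_i^T)=\infty$. Second, and more damaging, once the noise direction depends on $i$, the \emph{variance} of the observed loss $\inner{\bx,\bw_t}$ equals $\sigma^2 w_{t,i}^2$ and therefore encodes $i$. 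A player can simply play $\bw_t=\be_k$ for $k=1,\ldots,d$ and detect which coordinate gives a nondeterministic loss, identifying $i$ in $d$ rounds. In KL terms, the divergence between $\Ncal(m_1,s_1^2)$ and $\Ncal(m_2,s_2^2)$ contains the term $\tfrac12(s_1^2/s_2^2-1-\log(s_1^2/s_2^2))$, which is $\Theta(1)$ (or infinite) whenever $w_{t,i}^2/w_{t,j}^2$ is far from $1$, so your claim that ``the dominating term remains $\Ocal(\mu^2 w_{t,i}^2/\sigma^2)$'' is false.

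The reason the paper's isotropic construction is the right one is precisely that it makes the observed-loss variance $\tfrac14\norm{\bw_t}_2^2$ \emph{independent of the hypothesis}, so the KL is purely a mean-shift term and the sum over $i$ telescopes cleanly. If you want to repair the validity constant, rescale the isotropic variance by $1/\log d$ (losing $\sqrt{\log d}$ in the bound) or invoke the sub-Gaussian-to-bounded reduction in the appendix; do not make the covariance hypothesis-dependent.
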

This leads to a $\Omega(\sqrt{dT})$ regret lower bound, which matches the
$\Ocal(\sqrt{dT})$ upper bound for the simplex attained with multi-armed
bandit algorithms such as EXP3 \cite{auer2002nonstochastic}.

\begin{theorem}\label{thm:hypercubeorigin}
Suppose that $d>1$, and let $\Wcal=[-1,1]^d$. Then for any player strategy,
there exists a valid distribution over loss vectors with mean $\bar{\bx}$
such that
  \[
  \E\left[\sum_{t=1}^{T}\inner{\bar{\bx},\bw_t}\right]-\min_{\bw\in\Wcal}\sum_{t=1}^{T}\inner{\bar{\bx},\bw} \geq c (d-1)\sqrt{T}
  \]
  for any $T\geq d^4/4$, where $c$ is a positive universal constant.
\end{theorem}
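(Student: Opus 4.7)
The plan is to mirror the proof of \thmref{thm:regretorigin}, exploiting the flat face $\{1\}\times[-1,1]^{d-1}$ of the hypercube and its orthogonality to the first axis. I build a randomized family of instances indexed by $\bsigma\in\{-1,+1\}^{d-1}$, argue that any player must either incur large first-coordinate regret or fail to detect $\bsigma$, and combine this dichotomy with a Pinsker/Cauchy--Schwarz bound to conclude.

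For each $\bsigma$, let $\bx_t$ be Gaussian with mean $\bar{\bx}_{\bsigma}=(-a,\mu\sigma_1,\ldots,\mu\sigma_{d-1})$ and diagonal covariance having a $\Theta(1)$ variance $\sigma_\xi^2$ in the first coordinate and a small $\gamma^2=\Theta(1/d^2)$ variance in every other coordinate, with $a$ a small absolute constant and $\mu=c_0/\sqrt{T}$. The scaling $\mu\propto 1/\sqrt{T}$ (instead of $\sqrt{d/T}$ as for the cylinder) reflects that the flat face of the hypercube has diameter linear in $d$, so the signal is $\mu(d-1)$ per round rather than $\mu\sqrt{d-1}$. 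The validity conditions $\norm{\bar{\bx}}_{*}\leq 1$ and the sub-Gaussian $\ell_1$-tail for $\bx$ reduce to $a+(d-1)\mu\leq 1$ and $\gamma=O(1/d)$ so that $\sum_j|\eta_j|=O(1)$, both of which hold once $T\geq d^4/4$.

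The optimizer is $\bw^*(\bsigma)=(1,-\sigma_1,\ldots,-\sigma_{d-1})$, so each round's regret decomposes as $a(1-w_{t,1})+\mu\sum_{j=1}^{d-1}(1+\sigma_jw_{t,j+1})$ with every summand nonnegative. Writing $A_j=\sum_tw_{t,j+1}\in[-T,T]$ and applying Pinsker for each coordinate gives $|\E[\sigma_jA_j]|\leq T\sqrt{K_j/2}$, where $K_j$ is the KL between the observation laws under $\sigma_j=\pm 1$. The chain rule for KL and the hypercube inequality $\sum_{j'}w_{t,j'+1}^2\leq d-1$ yield a per-round contribution of $\frac{2\mu^2 w_{t,j+1}^2}{w_{t,1}^2\sigma_\xi^2+\gamma^2\sum_{j'}w_{t,j'+1}^2}$; summing over $j$ and applying Cauchy--Schwarz gives
\[
\E[R_T]\;\geq\;a\sum_t\E[1-w_{t,1}]\;+\;\mu(d-1)T\left(1-\frac{\mu}{\sigma_\xi}\sqrt{\E\sum_t\frac{1}{w_{t,1}^2+1/d}}\right),
\]
which has exactly the functional structure of the cylinder bound.

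A two-case argument finishes: if $\sum_t\E[1-w_{t,1}]\geq(d-1)\sqrt{T}/a$ the first term already suffices, whereas otherwise at most $O(d\sqrt{T})$ rounds can have $w_{t,1}<1/2$, so the inner sum is bounded by $4T+O(d^2\sqrt{T})=O(T)$ precisely because $T\geq d^4/4$, the square root is $O(\sqrt{T})$, and the second term delivers $\Omega((d-1)\sqrt{T})$ once $c_0$ is small. The main obstacle is calibrating the regularization $\gamma^2=\Theta(1/d^2)$: it must be small enough to satisfy the $\ell_1$-dual-norm validity (substantially more restrictive than the cylinder's mixed dual $\ell_\infty+\ell_2$ norm) yet large enough to produce the $1/d$ floor in the denominator that tames the Pinsker bound when the player drives $w_{t,1}$ toward zero. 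Balancing these competing requirements is exactly what forces the threshold $T\geq d^4/4$.
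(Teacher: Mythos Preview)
Your proposal is correct and follows essentially the same route as the paper: the same Gaussian family indexed by $\bsigma$, the same scaling $\mu=\Theta(1/\sqrt{T})$ and per-coordinate noise $\gamma^2=\Theta(1/d^2)$ (forced, as you correctly identify, by the $\ell_1$-dual validity constraint), the same Pinsker/chain-rule reduction, and the same functional lower bound
\[
a\sum_t\E[1-w_{t,1}]+\mu(d-1)T\left(1-c\mu\sqrt{\sum_t\E\Big[\tfrac{1}{w_{t,1}^2+1/d}\Big]}\right).
\]
The only substantive difference is the endgame. You finish with a two-case Markov argument (either $\sum_t\E[1-w_{t,1}]$ is large, or few rounds have $w_{t,1}<1/2$ so the inner sum is $O(T)$ once $T\geq d^4/4$), whereas the paper instead proves the pointwise inequality $\tfrac{1}{w^2+1/d}\leq d(1-|w|)+1$ (\lemref{lem:dw}), which converts the square-root term directly into a function of $z:=\sum_t\E[1-|w_{t,1}|]$ and reduces the problem to a one-variable convex minimization whose minimizer is shown to be $z=0$ under the same condition $T\geq d^4/4$. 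Your argument is slightly more elementary; the paper's yields cleaner explicit constants. Both are valid and neither requires an idea the other lacks.
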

This $\Omega(d\sqrt{T})$ regret lower bound matches the $\Ocal(d\sqrt{T})$
upper bound attained in \cite{BubCesKa12} (using an algorithm which actually
plays only on the corners of the hypercube).

\section{Proofs}\label{sec:proofs}

\subsection{Proof of \thmref{thm:unorigin}}

To simplify notation, suppose that the game takes place in $\reals^{d+1}$ for
some $d>0$. We denote the first coordinate as coordinate $0$, and the other
coordinates as $1,2,\ldots,d$.

By Yao's minimax principle, it is sufficient to provide a randomized strategy
to choose a loss vector distribution $\Dcal$, such that for any deterministic
player, the expected error is as defined in the theorem. In particular, we
use the following strategy:
\begin{itemize}
  \item Choose $\bsigma \in \{-1,+1\}^d$ uniformly at random.
  \item Use the distribution $\Dcal_{\bsigma}$ over loss vectors $\bx$,
      defined as follows: $(x_1,\ldots,x_d)$ is fixed to be $\mu\bsigma$
      (where $\mu\leq \frac{1}{2\sqrt{d}}$ is a parameter to be chosen
      later), and $x_0$ has a Gaussian distribution
      $\Ncal\left(0,\frac{1}{36}\right)$.
\end{itemize}
First, let us verify that any $\Dcal_{\bsigma}$ is a valid distribution. We
have
\[
\E[\norm{\bar{\bx}}_*] = \sup_{\bw\in\Wcal}|\inner{\E[\bx],\bw}|
= \sup_{\bw\in\Wcal}\left|0+\mu\sum_{i=1}^{d}\sigma_i w_i\right| =
\mu\sqrt{d} \leq \frac{1}{2},
\]
and moreover, for any $\bx$ in the support of $\Dcal_{\bsigma}$ and for any
$\bw\in\Wcal$,
\[
|\inner{\bx,\bw}| ~\leq~ |3 x_0|+\left|\sum_{i=1}^{d}x_i w_i\right|
~\leq~ |3x_0|+\mu\sum_{i=1}^{d}|w_i| ~\leq~ |3~x_0|+\mu\sqrt{d} ~\leq~ |3~x_0|+\frac{1}{2},
\]
by the fact that $\norm{\cdot}_1\leq \sqrt{d}\norm{\cdot}_2$ and our
assumption on $\mu$. $3x_0$ is normally distributed with mean zero and
variance $9*\frac{1}{36}=1/4$, from which it is easily verified that
$\Pr\left(\sup_{\bw}|\inner{\bx,\bw}|>z\right)\leq 2\exp(-z^2/2)$ for all
$z\geq 1$ as required.

We now start the proof. For any fixed $\Dcal_{\bsigma}$, recall that
$\bar{\bx}=\E_{\bx\sim\Dcal_{\bsigma}}[\bx]=(0,\mu\sigma_1,\ldots,\mu\sigma_d)$,
and define
\[
\bw^*=\left(0,-\frac{\sigma_1}{\sqrt{d}},\ldots,-\frac{\sigma_d}{\sqrt{d}}\right)=
-\frac{1}{\mu\sqrt{d}}\bar{\bx}.
\]
It is easily verified that $\bw^*$ is a minimizer of $\inner{\bar{\bx},\bw}$
over $\Wcal$. Therefore,
\begin{align*}
  &\E\left[\inner{\bar{\bx},\hat{\bw}}-\min_{\bw\in\Wcal}\inner{\bar{\bx},\bw}\right]
  ~=~\E[\inner{\bar{\bx},\hat{\bw}-\bw^*}]
  ~=~-\mu\sqrt{d}~\E[\inner{\bw^*,\hat{\bw}-\bw^*}]\\
  &=~ \mu\sqrt{d}~\E[\norm{\bw^*}_2^2-\inner{\bw^*,\hat{\bw}}]
  ~=~ \frac{\mu\sqrt{d}}{2}\E[\norm{\bw^*}_2^2+\norm{\bw^*}_2^2-2\inner{\bw^*,\hat{\bw}}].
\end{align*}
Since $\sum_{i=1}^{d}\hat{w}_i^2\leq 1=\norm{\bw^*}_2^2$ and $w^*_0=0$, we
can lower bound the above by
\begin{align}
&\frac{\mu\sqrt{d}}{2}\E\left[\sum_{i=1}^{d}(w^*)_i^2+\sum_{i=1}^{d}\hat{w}_i^2-
2\sum_{i=1}^{d}w^*_i \hat{w}_i\right]
~=~
\frac{\mu\sqrt{d}}{2}\E\left[\sum_{i=1}^{d}\left(w^*_i-\hat{w}_i\right)^2\right]\notag\\
&=
\frac{\mu\sqrt{d}}{2}\sum_{i=1}^{d}\E\left[\left(\frac{\sigma_i}{\sqrt{d}}+\hat{w}_i\right)^2\right]
~\geq~
\frac{\mu\sqrt{d}}{2}\sum_{i=1}^{d}\left(\frac{1}{\sqrt{d}}\right)^2\Pr(\sigma_i \hat{w}_i\geq 0)\notag\\
&=\frac{\mu}{2\sqrt{d}}\sum_{i=1}^{d}\Pr(\sigma_i\hat{w}_i\geq 0),\label{eq:prlow}
\end{align}
where $\Pr(\cdot)$ is the probability with respect to the joint randomness of
$\bsigma$ and $\hat{\bw}$. Since each $\sigma_i$ is uniformly distributed on
$\{-1,+1\}$, we can lower bound each probability term as follows:
\begin{align*}
  \Pr(\sigma_i\hat{w}_i\geq 0) &= \frac{1}{2}\left(\Pr(\hat{w}_i\geq 0|\sigma_i=1)+\Pr(\hat{w}_i\leq 0|\sigma_i=-1)\right)\\
  &\geq \frac{1}{2}\left(\Pr(\hat{w}_i>0|\sigma_i=1)+1-\Pr(\hat{w}_i>0|\sigma_i=-1)\right)\\
  &= \frac{1}{2}\left(1-\left(\Pr(\hat{w}_i>0|\sigma_i=-1)-\Pr(\hat{w}_i>0|\sigma_i=1)\right)\right)\\
  &\geq \frac{1}{2}\left(1-\left|\Pr(\hat{w}_i>0|\sigma_i=-1)-\Pr(\hat{w}_i>0|\sigma_i=1)\right|\right).
\end{align*}
Plugging this back to \eqref{eq:prlow}, we get the lower bound
\begin{align*}
  &\frac{\mu}{4\sqrt{d}}\sum_{i=1}^{d}\left(1-\left|\Pr(\hat{w}_i>0|\sigma_i=-1)-\Pr(\hat{w}_i>0|\sigma_i=1)\right|\right)\\
  &=~
  \frac{\mu\sqrt{d}}{4}\left(1-\frac{1}{d}\sum_{i=1}^{d}\left|\Pr(\hat{w}_i>0|\sigma_i=-1)-\Pr(\hat{w}_i>0|\sigma_i=1)\right|\right).
\end{align*}
To continue, recall that we assume the player is deterministic, in which case
$\bw=\bw(\bv)$ is a function of the sequence of losses $\bv=(v_1,\ldots,v_T)$
observed by the player over $T$ rounds. Letting $p$ denote the probability
density function over $\bv$, and $\mathbf{1}_A$ denote the indicator of the
event $A$, we can rewrite the above as
\begin{align}
  &\frac{\mu\sqrt{d}}{4}\left(1-\frac{1}{d}\sum_{i=1}^{d}\left|\int_{\bv}\mathbf{1}_{w_i(\bv)>0}p(\bv|\sigma_i=-1)d\bv
  -\int_{\bv}\mathbf{1}_{w_i>0}p(\bv|\sigma_i=1)d\bv\right|\right) \notag\\
  &=
  \frac{\mu\sqrt{d}}{4}\left(1-\frac{1}{d}\sum_{i=1}^{d}\left|\int_{\bv}\mathbf{1}_{w_i(\bv)>0}\left(p(\bv|\sigma_i=-1)
  -p(\bv|\sigma_i=1)\right)d\bv\right|\right)\notag \\
  &\geq
  \frac{\mu\sqrt{d}}{4}\left(1-\frac{1}{d}\sum_{i=1}^{d}\int_{\bv}\left|p(\bv|\sigma_i=-1)
  -p(\bv|\sigma_i=1)\right|d\bv\right).\label{eq:prprepin}
\end{align}
Each integral represents the total variation distance between the densities
$p(\cdot|\sigma_i=1)$ and $p(\cdot|\sigma_i=-1)$. By Pinsker's inequality, it
can be upper bounded as follows:
\begin{equation}\label{eq:pins}
\int_{\bv}\left|p(\bv|\sigma_i=-1)
  -p(\bv|\sigma_i=1)\right|d\bv ~\leq~
\sqrt{2~D_{kl}(p(\bv|\sigma_i=-1)||p(\bv|\sigma_i=1))},
\end{equation}
where $D_{kl}(p||q)$ is the Kullback-Leibler (KL) divergence between $p$ and
$q$. By the chain rule, this can be upper bounded in turn by
\begin{equation}\label{eq:dkl}
\sqrt{2~\sum_{t=1}^{T}D_{kl}(p(v_t|\sigma_i=-1,v_1,\ldots,v_{t-1})
~||~p(v_t|\sigma_i=1,v_1,\ldots,v_{t-1}))}.
\end{equation}
Since the player is deterministic, any values $v_1,\ldots,v_{t-1}$ determine
the point $\bw_t$ that the player will choose on round $t$. Thus, the
distribution of $v_t=\inner{\bx_t,\bw_t}$ depends on the distribution of
$\bx_t$, as determined by $\Dcal_{\bsigma}$. By definition of
$\Dcal_{\bsigma}$:
\begin{itemize}
\item Under the condition $\sigma_i=1$, $v_t=\inner{\bx_t,\bw_t}$ has a
    Gaussian distribution with mean $\mu w_{t,i}+\sum_{j\in \{1\ldots
    d\}\setminus\{i\}}(\mu\sigma_j w_{t,j})$, and variance
    $\frac{1}{36}w_{t,0}^2 \geq \frac{1}{36}$ (note that here we crucially
    use the domain assumptions on $\Wcal$, which imply $w_{t,0}\geq 1$).
\item Under the condition $\sigma_i=-1$, $v_t=\inner{\bx_t,\bw_t}$ has a
    Gaussian distribution with mean $-\mu w_{t,i}+\sum_{j\{1\ldots
    d\}\setminus\{i\}}(\mu\sigma_j w_{t,j})$, and variance
    $\frac{1}{36}w_{t,0}^2 \geq \frac{1}{36}$.
\end{itemize}
By a standard result on the KL divergence of two Gaussian distributions, we
therefore have that \eqref{eq:dkl} is at most
\[
\sqrt{2~\sum_{t=1}^{T}\E\left[72 \mu^2 w_{t,i}^2|\sigma_i=-1\right]}.
\]
Recalling that this constitutes an upper bound on the left hand side of
\eqref{eq:pins}, we get
\begin{equation}\label{eq:pinsfinal}
  \int_{\bv}\left|p(\bv|\sigma_i=-1)
  -p(\bv|\sigma_i=1)\right|d\bv ~\leq~
  \sqrt{2~\sum_{t=1}^{T}\E\left[72 \mu^2 w_{t,i}^2|\sigma_i=-1\right]}.
\end{equation}
To get rid of the conditioning on $\sigma_i=-1$, note that the total
variation is symmetric, so again using Pinsker's inequality, we can get a
variant of \eqref{eq:pins} of the form
\[
\int_{\bv}\left|p(\bv|\sigma_i=-1)
  -p(\bv|\sigma_i=1)\right|d\bv ~\leq~
\sqrt{2~D_{kl}(p(\bv|\sigma_i=1)||p(\bv|\sigma_i=-1))}.
\]
Using the same derivation as above, this leads to the upper bound
\begin{equation}\label{eq:pinsfinal2}
  \int_{\bv}\left|p(\bv|\sigma_i=-1)
  -p(\bv|\sigma_i=1)\right|d\bv ~\leq~
  \sqrt{2~\sum_{t=1}^{T}\E\left[72 \mu^2 w_{t,i}^2|\sigma_i=1\right]}.
\end{equation}
Combining \eqref{eq:pinsfinal} and \eqref{eq:pinsfinal2}, and using the
elementary inequality $\min\{\sqrt{a},\sqrt{b}\}\leq \sqrt{\frac{a+b}{2}}$
and the fact that $\sigma_i$ is uniformly distributed on $\{-1,+1\}$, we get
\begin{align}
  \int_{\bv}\left|p(\bv|\sigma_i=-1)
  -p(\bv|\sigma_i=1)\right|d\bv
  &\leq
  \min\left\{\sqrt{2~\sum_{t=1}^{T}\E\left[72 \mu^2 w_{t,i}^2|\sigma_i=-1\right]},
  \sqrt{2~\sum_{t=1}^{T}\E\left[72 \mu^2 w_{t,i}^2|\sigma_i=1\right]}\right\}\notag\\
  &\leq
  \sqrt{144\mu^2\sum_{t=1}^{T}\frac{1}{2}\left(\E\left[w_{t,i}^2|\sigma_i=-1\right]
  +\E\left[w_{t,i}^2|\sigma_i=1\right]\right)}\notag\\
  &= \mu\sqrt{144\sum_{t=1}^{T}\E[w_{t,i}^2]}~.\label{eq:pinsdone}
\end{align}
Plugging this back into \eqref{eq:prprepin}, we get an expected error lower
bound of
\[
\frac{\mu\sqrt{d}}{4}\left(1-\frac{1}{d}\sum_{i=1}^{d}\mu\sqrt{144\sum_{t=1}^{T}\E[w_{t,i}^2]}\right),
\]
which by Jensen's inequality is at least
\[
\frac{\mu\sqrt{d}}{4}\left(1-\mu\sqrt{\frac{144}{d}\sum_{i=1}^{d}\sum_{t=1}^{T}\E[w_{t,i}^2]}\right)
~=~
\frac{\mu\sqrt{d}}{4}\left(1-\mu\sqrt{\frac{144}{d}\sum_{t=1}^{T}\E\left[\sum_{i=1}^{d}w_{t,i}^2\right]}\right).
\]
By definition of $\Wcal$, we have that $\sum_{i=1}^{d}w_{t,i}^2 \leq 1$
always, so we can lower bound this by
\begin{equation}\label{eq:finallowbound}
\frac{\mu\sqrt{d}}{4}\left(1-\mu\sqrt{\frac{144 T}{d}}\right).
\end{equation}
Let us now consider two cases:
\begin{itemize}
  \item If $\frac{1}{\sqrt{d}}> \sqrt{\frac{d}{144T}}$, then we choose
      $\mu=\frac{1}{2}\sqrt{\frac{d}{144T}}$ (which ensures the condition
      $\mu\leq \frac{1}{2\sqrt{d}}$), and \eqref{eq:finallowbound} equals
      $\frac{\mu\sqrt{d}}{4}*\frac{1}{2}=\frac{d}{16\sqrt{144T}}$.
  \item If $\frac{1}{\sqrt{d}}\leq \sqrt{\frac{d}{144T}}$, then we choose
      $\mu=\frac{1}{2\sqrt{d}}$, and \eqref{eq:finallowbound} is lower
      bounded by $\frac{\mu\sqrt{d}}{4}*\frac{1}{2}=\frac{1}{16}$.
\end{itemize}
Therefore, by choosing $\mu$ appropriately, we can lower bound
\eqref{eq:finallowbound} by $\min\left\{\frac{1}{16},\frac{d}{16\sqrt{144
T}}\right\}\geq 0.005\min\left\{1,\frac{d}{\sqrt{T}}\right\}$. Since this
lower bound holds for a domain in $\reals^{d+1}$ for any $d>0$, we get a
$0.005\min\left\{1,\frac{d-1}{\sqrt{T}}\right\}$ lower bound for a domain in
$\reals^d$ for any $d>1$, as required.

\subsection{Proof of \thmref{thm:regretorigin}}

As in the proof of \thmref{thm:unorigin}, to simplify notation, we assume the
game takes place in $\reals^{d+1}$ for some $d>0$, so the domain is
$\Wcal=[-1,+1]\times \{\bw\in\reals^d: \norm{\bw}_2\leq 1\}$. We denote the
first coordinate as coordinate $0$, and the other coordinates as
$1,2,\ldots,d$.

The proof uses a somewhat different loss vector distribution than that of
\thmref{thm:unorigin}, designed to force the algorithm to choose points far
from and to one side of the origin to ensure small regret.

By Yao's minimax principle, it is sufficient to provide a randomized strategy
to choose a loss vector distribution $\Dcal$, such that for any deterministic
player, the expected error is as defined in the theorem. In particular, we
use the following strategy:
\begin{itemize}
  \item Choose $\bsigma \in \{-1,+1\}^d$ uniformly at random.
  \item Use the distribution $\Dcal_{\bsigma}$ over loss vectors $\bx$,
      defined as follows: $(x_1,\ldots,x_d)$ has a Gaussian distribution
      $\Ncal\left(\mu \bsigma,\frac{1}{16d}I\right)$ (where $\mu\leq
      \frac{1}{4\sqrt{d}}$ is a parameter to be chosen later), and $x_0$ is
      chosen independently according to a Gaussian distribution
      $\Ncal\left(-\frac{1}{4},\frac{1}{16}\right)$. Note that this is
      different than the construction in the proof of
      \thmref{thm:unorigin}, and the distribution of $x_0$ is not
      zero-mean. The idea is that $w_{t,0}$ will have to be close to $1$
      most of the time to get low regret, hence the points queried are far
      from and to one side of the origin.
\end{itemize}
First, let us verify that any $\Dcal_{\bsigma}$ is a valid distribution. We
have
\[
\E[\norm{\bar{\bx}}_*] = \sup_{\bw\in\Wcal}|\inner{\E[\bx],\bw}|
= \sup_{\bw\in\Wcal}\left|\frac{1}{4}+\mu\sum_{i=1}^{d}\sigma_i w_i\right| =
\frac{1}{4}+\mu\sqrt{d} \leq \frac{1}{4}+\frac{1}{4} < 1.
\]
Moreover, for any $\bx$ in the support of $\Dcal_{\bsigma}$ and for any
$\bw\in\Wcal$, by the Cauchy-Schwartz inequality,
\[
|\inner{\bx,\bw}| ~\leq~ \left|x_0\right|+\left|\sum_{i=1}^{d}x_i w_i\right|
~\leq~ \left|x_0\right|+\sqrt{\sum_{i=1}^{d}x_i^2}\sqrt{\sum_{i=1}^{d}w_i^2}
~\leq~ \left|x_0\right|+\sqrt{\sum_{i=1}^{d}x_i^2}.
\]
$x_0$ is normally distributed with mean $-1/4$ and variance $1/16$, and each
$x_i$ is independently normally distributed with mean at most $1/4\sqrt{d}$
and variance $1/16d$, from which it can be verified using Gaussian tail
bounds that
\[
\Pr\left(\sup_{\bw}|\inner{\bx,\bw}|>z\right)
~\leq~ \Pr\left(\left|x_0\right|+\sqrt{\sum_{i=1}^{d}x_i^2}>z\right)
~\leq~ 2\exp(-z^2/2)
\]
for all $z\geq 1$ as required.

We now start the proof. Define the scalar and $d$-dimensional vectors
\[
\hat{w}_0 = \frac{1}{T}\sum_{t=1}^{T}w_{t,0}~~,~~
\hat{\bw} = \frac{1}{T}\sum_{t=1}^{T}(w_{t,1},w_{t,2},\ldots,w_{t,d})~~,~~
\bw^* = -\frac{1}{\sqrt{d}}\bsigma.
\]
For any fixed $\Dcal_{\bsigma}$, recall that
$\bar{\bx}=\E_{\bx\sim\Dcal_{\bsigma}}[\bx]=\left(-\frac{1}{4},\mu\sigma_1,\ldots,\mu\sigma_d\right)
=-\left(\frac{1}{4},\mu\sqrt{d}w^*_1,\ldots,\mu\sqrt{d}w^*_d\right)$. It is
therefore easily verified that $\left(1,w^*_1,\ldots,w^*_d\right)$ is a
minimizer of $\inner{\bar{\bx},\bw}$ over $\Wcal$. Recalling the formulation
of regret in \eqref{eq:stochregret}, we can write it as
\begin{align}
  &T~\E\left[\inner{\bar{\bx},\frac{1}{T}\sum_{t=1}^{T}\bw_t}-\min_{\bw\in\Wcal}\inner{\bar{\bx},\bw}\right]
  ~=~T~\E\left[\frac{1}{4}\left(1-\hat{w}_0\right)-\inner{\mu\sqrt{d}\bw^*,\hat{\bw}}+\inner{\mu\sqrt{d}\bw^*,\bw^*}\right]\notag\\
  &=~T~\E\left[\frac{1}{4}\left(1-\hat{w}_0\right)+\mu\sqrt{d}~\inner{\bw^*,\bw^*-\hat{\bw}}\right]
  ~=~ T~\E\left[\frac{1}{4}\left(1-\hat{w}_0\right)+\mu\sqrt{d}\left(\norm{\bw^*}_2^2-\inner{\bw^*,\hat{\bw}}\right)\right]\notag\\
  &=~ \frac{T}{4}\E\left[1-\hat{w}_0\right]+\frac{\mu\sqrt{d}T}{2}\E[\norm{\bw^*}_2^2+\norm{\bw^*}_2^2-2\inner{\bw^*,\hat{\bw}}].\label{eq:2prlow}
\end{align}
By definition of the domain, each $(w_{t,1},\ldots,w_{t,d})$ has norm at most
$1$, and therefore $\hat{\bw}$, which is their average, also has norm at most
$1$. Therefore, $\sum_{i=1}^{d}\hat{w}_i^2\leq 1=\norm{\bw^*}_2^2$, so we can
lower bound the second expectation above by
\begin{align*}
&\E\left[\sum_{i=1}^{d}(w^*_i)^2+\sum_{i=1}^{d}\hat{w}_i^2-2\sum_{i=1}^{d}w^*_i \hat{w}_i\right]
~=~\E\left[\sum_{i=1}^{d}\left(w^*_i-\hat{w}_i\right)^2\right]
~=~
\sum_{i=1}^{d}\E\left[\left(\frac{\sigma_i}{\sqrt{d}}+\hat{w}_i\right)^2\right]\notag\\
&~\geq~
\sum_{i=1}^{d}\left(\frac{1}{\sqrt{d}}\right)^2\Pr(\sigma_i \hat{w}_i\geq 0)
~=~\frac{1}{d}\sum_{i=1}^{d}\Pr(\sigma_i\hat{w}_i\geq 0),
\end{align*}
where $\Pr(\cdot)$ is the probability with respect to the joint randomness of
$\bsigma$ and $\hat{\bw}$. Plugging this back into \eqref{eq:2prlow}, we get
a regret lower bound of the form
\[
  \frac{T}{4}\E\left[1-\hat{w}_0\right]+\frac{\mu T}{2\sqrt{d}}\sum_{i=1}^{d}\Pr(\sigma_i\hat{w}_i\geq 0).
\]
These probabilities can now be lower bounded as in the proof of
\thmref{thm:unorigin} (see the derivation following \eqref{eq:prlow}), which
imply that
\begin{align}
&\frac{T}{4}\E\left[1-\hat{w}_0\right]+\frac{\mu T}{2\sqrt{d}}\sum_{i=1}^{d}\Pr(\sigma_i\hat{w}_i\geq 0)\notag\\
&~~~\geq
\frac{T}{4}\E\left[1-\hat{w}_0\right]+
\frac{\mu\sqrt{d}T}{4}\left(1-\frac{1}{d}\sum_{i=1}^{d}\int_{\bv}|p(\bv|\sigma_i=-1)-p(\bv|\sigma_i=1)|d\bv\right),\label{eq:2prepins}
\end{align}
where $\bv=(v_1,\ldots,v_T)$ are the sequence of losses observed by the
player over the $T$ rounds, and $p$ is the probability density function over
$\bv$. As in the proof of \thmref{thm:unorigin}, we can now use Pinsker's
inequality and the chain rule to upper the integral as follows:
\begin{equation}\label{eq:2pins}
\int_{\bv}|p(\bv|\sigma_i=-1)-p(\bv|\sigma_i=1)|d\bv
~\leq~
  \sqrt{2\sum_{t=1}^{T}D_{kl}\left(p(v_t|\sigma_i=1,v_1\ldots v_{t-1})\middle|\middle|p(v_t|\sigma_i=-1,v_1\ldots v_{t-1})\right)},
\end{equation}
where $D_{kl}$ is the KL divergence. Since the player is deterministic, any
values $v_1,\ldots,v_{t-1}$ determine the point $\bw_t$ that the player will
choose on round $t$. Thus, the distribution of $v_t=\inner{\bx_t,\bw_t}$
depends on the distribution of $\bx_t$, as determined by $\Dcal_{\bsigma}$.
By definition of $\Dcal_{\bsigma}$:
\begin{itemize}
\item Under the condition $\sigma_i=1$, $v_t=\inner{\bx_t,\bw_t}$ has a
    Gaussian distribution with mean\\ $-\frac{1}{4}w_{t,0}+\mu
    w_{t,i}+\sum_{j\in \{1\ldots d\}\setminus\{i\}}(\mu\sigma_j w_{t,j})$,
    and variance
    $\frac{1}{16}w_{t,0}^2+\sum_{i=1}^{d}\frac{1}{16d}w_{t,i}^2$.
\item Under the condition $\sigma_i=-1$, $v_t=\inner{\bx_t,\bw_t}$ has a
    Gaussian distribution with mean\\ $-\frac{1}{4}w_{t,0}-\mu
    w_{t,i}+\sum_{j\{1\ldots d\}\setminus\{i\}}(\mu\sigma_j w_{t,j})$, and
    variance $\frac{1}{16}w_{t,0}^2+\sum_{i=1}^{d}\frac{1}{16d}w_{t,i}^2$.
\end{itemize}
By a standard result on the KL divergence of two Gaussian distributions, we
therefore have that \eqref{eq:2pins} is at most
\begin{equation}\label{eq:2pins1}
  \sqrt{2\sum_{t=1}^{T}\E\left[\frac{32 \mu^2 w_{t,i}^2}{w_{t,0}^2+\sum_{i=1}^{d}\frac{1}{d}w_{t,i}^2}\middle|\sigma_i=1\right]}.
\end{equation}
As in the proof of \thmref{thm:unorigin}, we can also upper bound
\eqref{eq:2pins} using the reverse order of probabilities:
\[
\int_{\bv}|p(\bv|\sigma_i=-1)-p(\bv|\sigma_i=1)|d\bv
~\leq~
  \sqrt{2\sum_{t=1}^{T}D_{kl}\left(p(v_t|\sigma_i=-1,v_1\ldots v_{t-1})\middle|\middle|p(v_t|\sigma_i=1,v_1\ldots v_{t-1})\right)},
\]
which leads to \eqref{eq:2pins} being upper bounded by
\begin{equation}\label{eq:2pins2}
  \sqrt{2\sum_{t=1}^{T}\E\left[\frac{32 \mu^2 w_{t,i}^2}{w_{t,0}^2+\sum_{i=1}^{d}\frac{1}{d}w_{t,i}^2}\middle|\sigma_i=-1\right]}.
\end{equation}
Combining \eqref{eq:2pins1} and \eqref{eq:2pins2} and using the elementary
inequality $\min\{\sqrt{a},\sqrt{b}\}\leq \sqrt{\frac{a+b}{2}}$ and the fact
that $\sigma_i$ is uniformly distributed on $\{-1,+1\}$, we get
\begin{align}
  &\int_{\bv}\left|p(\bv|\sigma_i=-1)
  -p(\bv|\sigma_i=1)\right|d\bv\notag\\
  &\leq
  \min\left\{\sqrt{2~\sum_{t=1}^{T}\E\left[\frac{32 \mu^2 w_{t,i}^2}{w_{t,0}^2+\sum_{i=1}^{d}\frac{1}{d}w_{t,i}^2}\middle|\sigma_i=-1\right]},
  \sqrt{2~\sum_{t=1}^{T}\E\left[\frac{32 \mu^2 w_{t,i}^2}{w_{t,0}^2+\sum_{i=1}^{d}\frac{1}{d}w_{t,i}^2}\middle|\sigma_i=1\right]}\right\}\notag\\
  &\leq
  \sqrt{64\mu^2\sum_{t=1}^{T}\frac{1}{2}\left(\E\left[\frac{w_{t,i}^2}{w_{t,0}^2+\sum_{i=1}^{d}\frac{1}{d}w_{t,i}^2}\middle|\sigma_i=-1\right]
  +\E\left[\frac{w_{t,i}^2}{w_{t,0}^2+\sum_{i=1}^{d}\frac{1}{d}w_{t,i}^2}\middle|\sigma_i=1\right]\right)}\notag\\
  &= 8\mu\sqrt{\sum_{t=1}^{T}\E\left[\frac{w_{t,i}^2}{w_{t,0}^2+\sum_{i=1}^{d}\frac{1}{d}w_{t,i}^2}\right]}~.\label{eq:2pinsdone}
\end{align}
Plugging this back into \eqref{eq:2prepins}, we get an expected error lower
bound of
\[
\frac{T}{4}\E\left[1-\hat{w}_0\right]+
\frac{\mu\sqrt{d}T}{4}\left(1-\frac{1}{d}\sum_{i=1}^{d}8\mu\sqrt{\sum_{t=1}^{T}\E\left[\frac{w_{t,i}^2}{w_{t,0}^2+\sum_{i=1}^{d}\frac{1}{d}w_{t,i}^2}\right]}\right),
\]
which by Jensen's inequality is at least
\begin{align*}
&\frac{T}{4}\E\left[1-\hat{w}_0\right]+
\frac{\mu\sqrt{d}T}{4}\left(1-8\mu\sqrt{\frac{1}{d}\sum_{i=1}^{d}\sum_{t=1}^{T}\E\left[\frac{w_{t,i}^2}{w_{t,0}^2+\sum_{i=1}^{d}\frac{1}{d}w_{t,i}^2}\right]}\right)\\
&~~~=~
\frac{T}{4}\E\left[1-\hat{w}_0\right]+
\frac{\mu\sqrt{d}T}{4}\left(1-8\mu\sqrt{\frac{1}{d}\sum_{t=1}^{T}\E\left[\frac{\sum_{i=1}^{d}w_{t,i}^2}{w_{t,0}^2+\sum_{i=1}^{d}\frac{1}{d}w_{t,i}^2}\right]}\right)\\
&~~~=~
\frac{T}{4}\E\left[1-\hat{w}_0\right]+
\frac{\mu\sqrt{d}T}{4}\left(1-8\mu\sqrt{\frac{1}{d}\sum_{t=1}^{T}\E\left[\frac{1}{\frac{w_{t,0}^2}{\sum_{i=1}^{d}w_{t,i}^2}+\frac{1}{d}}\right]}\right).
\end{align*}
By definition of $\Wcal$, we have that $\sum_{i=1}^{d}w_{t,i}^2 \leq 1$
always. Using this and recalling that
$\hat{w}_0=\frac{1}{T}\sum_{t=1}^{T}w_{t,0}$, we can lower bound the above by
\begin{equation}\label{eq:2finallowbound}
\frac{1}{4}\sum_{t=1}^{T}\E\left[1-w_{t,0}\right]+
\frac{\mu\sqrt{d}T}{4}\left(1-8\mu\sqrt{\frac{1}{d}\sum_{t=1}^{T}\E\left[\frac{1}{w_{t,0}^2+\frac{1}{d}}\right]}\right).
\end{equation}
The trick now is to argue that not matter what are the values of $w_{t,0}$,
this lower bound will be large: Either $w_{t,0}$ will tend to be bounded away
from $1$, and then \eqref{eq:2finallowbound} will be large due to the first
term; Or that $w_{t,0}$ will tend to be very close to $1$, but then
\eqref{eq:2finallowbound} will be large due to the second term. To make this
precise, we use the following technical lemma:
\begin{lemma}\label{lem:dw}
  For any integer $d\geq 1$, and for any $w\in [-1,1]$, it holds that $
  \frac{1}{w^2+\frac{1}{d}}\leq d(1-|w|)+1$.
\end{lemma}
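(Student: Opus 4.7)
The inequality only depends on $|w|$, so the plan is to set $u = |w| \in [0,1]$ and prove the equivalent statement
\[
\frac{1}{u^2 + 1/d} \leq d(1-u) + 1.
\]
Since the denominator $u^2 + 1/d$ is strictly positive, I would multiply through and reduce the claim to showing that the polynomial
\[
P(u) \;:=\; \bigl(d(1-u) + 1\bigr)\bigl(u^2 + 1/d\bigr) - 1
\]
is nonnegative on $[0,1]$. A direct expansion gives $P(u) = du^2(1-u) - u(1-u) + 1/d$, which factors neatly as
\[
P(u) \;=\; u(1-u)(du - 1) + \frac{1}{d}.
\]

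The remaining work is a short case split on the sign of $du-1$. If $u \geq 1/d$, then each of the three factors $u$, $1-u$, and $du-1$ is nonnegative, so $P(u) \geq 1/d > 0$ and we are done. If $u < 1/d$, the factor $du-1$ is negative, so I need to bound the magnitude of $u(1-u)(1-du)$ by $1/d$; but under $u < 1/d$ we have $u(1-u)(1-du) \leq u \leq 1/d$, using $1-u \leq 1$ and $1 - du \leq 1$, which again gives $P(u) \geq 0$.

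I do not anticipate a real obstacle here: the only mild subtlety is spotting the factorization $u(1-u)(du-1) + 1/d$, after which the sign analysis is immediate. Everything else is routine algebra, and the endpoints $u=0$ (where the inequality becomes $d \leq d+1$) and $u=1$ (where it becomes $d/(d+1) \leq 1$) serve as sanity checks.
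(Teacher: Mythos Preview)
Your proof is correct and follows essentially the same route as the paper: reduce by symmetry to $u=|w|\in[0,1]$, clear the denominator, factor to obtain $u(1-u)(du-1)+\tfrac{1}{d}\geq 0$, and finish with the same case split at $u=1/d$ using $u(1-u)(1-du)\leq u\leq 1/d$ when $u<1/d$.
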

\begin{proof}
  Since both sides of the inequality are the same for $w$ and $-w$, we can
  assume without loss of generality, that $w\in [0,1]$, and prove that
  $\frac{1}{w^2+\frac{1}{d}}\leq d(1-w)+1$ over this domain.
  
  By algebraic manipulations, this inequality is equivalent to the assertion
  that $(d+1)w^2-dw^3-w\geq -\frac{1}{d}$, or equivalently,
  \[
  w(1-w)(dw-1)\geq -\frac{1}{d}.
  \]
  Considering the domain of $w$, the left hand side is non-positive only when $w\in
  [0,1/d]$, because of the third term, but in that regime $(dw-1)\in [-1,0]$ and $(1-w)\in [0,1]$, and
  therefore
  \[
  w(1-w)(dw-1) \geq w*1*(-1) = -w \geq -\frac{1}{d}
  \]
  as required.
\end{proof}
Applying this inequality, we can lower bound \eqref{eq:2finallowbound} by
\begin{align*}
&\frac{1}{4}\sum_{t=1}^{T}\E\left[1-|w_{t,0}|\right]+
\frac{\mu\sqrt{d}T}{4}\left(1-8\mu\sqrt{\frac{1}{d}\sum_{t=1}^{T}\E\left[d(1-|w_{t,0}|)+1)\right]}\right)\\
&=~
\frac{1}{4}\sum_{t=1}^{T}\E\left[1-|w_{t,0}|\right]+
\frac{\mu\sqrt{d}T}{4}\left(1-8\mu\sqrt{\frac{T}{d}+\sum_{t=1}^{T}\E\left[1-|w_{t,0}|\right]}\right)\\
&\geq~
\min_{z\in [0,T]}\left(
\frac{z}{4}+
\frac{\mu\sqrt{d}T}{4}\left(1-8\mu\sqrt{\frac{T}{d}+z}\right)\right),
\end{align*}
Recalling that $\mu$ is a free parameter, let us choose
$\mu=\frac{1}{16}\sqrt{\frac{d}{T}}$ (which satisfies the assumption $\mu\leq
\frac{1}{4\sqrt{d}}$, since we assume $T\geq d^4/16$). We therefore get
\begin{equation}\label{eq:nu}
\frac{1}{4}\min_{z\in [0,T]}\left(
z+\frac{1}{16}d\sqrt{T}\left(1-\frac{1}{2}\sqrt{1+\frac{d}{T}z}\right)\right).
\end{equation}
Note that this expression is convex with respect to $z$, and by
differentiating, has an extremal point at
\[
z=2^{-12}d^3-\frac{T}{d}.
\]
Since we assume $T\geq d^4/16$, this point is non-positive, and therefore the
minimum in \eqref{eq:nu} is attained at $z=0$, where it equals
\[
\frac{1}{64}d\sqrt{T}\left(1-\frac{1}{2}\right)= \frac{1}{128}d\sqrt{T}.
\]
Since this lower bound holds for a domain in $\reals^{d+1}$ for any $d>0$, we
get a $\frac{1}{128}(d-1)\sqrt{T}$ lower bound for a domain in $\reals^d$ for
any $d>1$, as required.

\subsection{Proof of \thmref{thm:simplexunorigin}}

By Yao's minimax principle, it is sufficient to provide a randomized strategy
to choose a loss vector distribution $\Dcal$, such that for any deterministic
player, the expected error is as defined in the theorem. In particular, we
use the following strategy:
\begin{itemize}
  \item Choose $J \in \{1,\ldots,d\}$ uniformly at random.
  \item Use the distribution $\Dcal_{J}$ over loss vectors $\bx$, defined
      as $\Ncal(-\mu\be_J,\frac{1}{4}I)$ (where $\mu\leq \frac{1}{2}$ is a
      parameter to be chosen later).
\end{itemize}
First, let us verify that any $\Dcal_{J}$ is a valid distribution. We have
\[
\E[\norm{\bar{\bx}}_*] = \sup_{\bw\in\Wcal}|\inner{\E[\bx],\bw}|
= \sup_{\bw\in\Wcal}|\mu w_j|\leq \mu \leq \frac{1}{2},
\]
and moreover, for any $\bx$ in the support of $\Dcal_{\bsigma}$ and for any
$\bw\in\Wcal$, $\inner{\bx,\bw}$ is Gaussian with mean $\mu w_{j}\in
\left[-\frac{1}{2},\frac{1}{2}\right]$ and variance
$\frac{1}{4}\sum_{i=1}^{d}w_i^2\leq \frac{1}{4}\sum_{i=1}^{d}|w_i|\leq
\frac{1}{4}$, from which it is easily verified that
$\Pr\left(\sup_{\bw}|\inner{\bx,\bw}|>z\right)\leq 2\exp(-z^2/2)$ for all
$z\geq 1$ as required.

We now start the proof. For any fixed $\Dcal_{\bsigma}$, recall that
$\bar{\bx}=-\mu \be_j$, and define $\bw^*=\be_j$. It is easily verified that
$\bw^*$ is a minimizer of $\inner{\bar{\bx},\bw}$ over $\Wcal$. Also, let
$\E_j$ denote expectation assuming that $J=j$, and recall that $J$ is chosen
uniformly at random from $\{1,\ldots,d\}$. Therefore,
\[
  \E\left[\inner{\bar{\bx},\hat{\bw}}-\min_{\bw\in\Wcal}\inner{\bar{\bx},\bw}\right]
  ~=~
  \frac{1}{d}\sum_{j=1}^{d}\E_j\left[\mu-\mu\E[\hat{w}_j\right]
  ~=~
  \mu-\frac{\mu}{d}\sum_{j=1}^{d}\E_j[\hat{w}_j].
\]
Define the reference distribution $\Dcal_0=\Ncal(\mathbf{0},\frac{1}{2}I)$
over loss vectors, which has zero-mean, and let $\E_0$ denote expectation
assuming the loss vectors are chosen from that distribution. Then we can
lower bound the above by
\begin{align}
\mu-\frac{\mu}{d}\sum_{j=1}^{d}\left(\E_0[\hat{w}_j]+\left|\E_0[\hat{w}_j]-\E_j[\hat{w}_j]\right|\right)&
\geq~
\mu-\frac{\mu}{d}\left(\E_0\left[\sum_{j=1}^{d}|\hat{w}_j|\right]+\sum_{j=1}^{d}\left|\E_0[\hat{w}_j]-\E_j[\hat{w}_j]\right|\right)\notag\\
&\geq~
\left(1-\frac{1}{d}\right)\mu-\frac{\mu}{d}\sum_{j=1}^{d}\left|\E_0[\hat{w}_j]-\E_j[\hat{w}_j]\right|,\label{eq:3lowexp}
\end{align}
where in the last inequality we used the fact that $\bw$ has $1$-norm at most
$1$. Now, recall that we assume that the player is deterministic, hence
$\hat{\bw}=\hat{\bw}(\bv)$ is a deterministic function of the sequence of
loss observations $\bv=(v_1,\ldots,v_T)$ made by the player over $T$ rounds.
Letting $p_0(\bv)$ and $p_j(\bv)$ denote the density functions with respect
to $\Dcal_0,\Dcal_j$ respectively, we can upper bound the expectation
differences in \eqref{eq:3lowexp} as follows:
\begin{align*}
  \sum_{j=1}^{d}\left|\E_0[\hat{w}_j]-\E_j[\hat{w_j}]\right|
  &=~
  \sum_{j=1}^{d}\left|\int_{\bv}\hat{w}_j(\bv)(p_0(\bv)-p_j(\bv))d\bv\right|\\
  &\leq~
  \sum_{j=1}^{d}\int_{\bv}|\hat{w}_j(\bv)|\left|p_0(\bv)-p_j(\bv)\right|d\bv\\
  &\leq~
  \sum_{j=1}^{d}\int_{\bv}\left|p_0(\bv)-p_j(\bv)\right|d\bv.
\end{align*}
By Pinsker's inequality, and the chain rule, this can be upper bounded by
\begin{equation}\label{eq:pins}
\sum_{j=1}^{d}\sqrt{2\sum_{t=1}^{T}D_{kl}\left(p_0(v_t|v_1\ldots v_{t-1})\middle|\middle|
p_j(v_t|v_1\ldots v_{t-1})\right)},
\end{equation}
where $D_{kl}(p||q)$ is the Kullback-Leibler (KL) divergence between $p$ and
$q$. Since the player is deterministic, any values $v_1,\ldots,v_{t-1}$
determine the point $\bw_t$ that the player will choose on round $t$. Thus,
the distribution of $v_t=\inner{\bx_t,\bw_t}$ depends on the distribution of
$\bx_t$:
\begin{itemize}
\item Under $p_j$ (corresponding to $\Dcal_j$), $v_t=\inner{\bx_t,\bw_t}$
    has a Gaussian distribution with mean $-\mu w_{t,j}$ and variance
    $\frac{1}{4}\sum_{i=1}^{d} w_{t,i}^2=\frac{1}{4}\norm{\bw_t}_2$.
\item Under $p_0$ (corresponding to $\Dcal_0$), $v_t=\inner{\bx_t,\bw_t}$
    has a Gaussian distribution with mean $0$ and variance
    $\frac{1}{4}\norm{\bw_t}_2$.
\end{itemize}
By a standard result on the KL divergence of two Gaussian distributions, we
therefore have that \eqref{eq:pins} is at most
\[
\sum_{j=1}^{d}\sqrt{2~\sum_{t=1}^{T}\E_0\left[\frac{2\mu^2 w_{t,j}^2}{\norm{\bw_t}_2}\right]}.
\]
By Jensen's inequality, this is at most
\begin{align*}
  &d\frac{1}{d}\sum_{j=1}^{d}\sqrt{2~\sum_{t=1}^{T}\E_0\left[\frac{2\mu^2 w_{t,j}^2}{\norm{\bw_t}_2}\right]}
  ~\leq~ d\sqrt{\frac{2}{d}\sum_{j=1}^{d}\sum_{t=1}^{T}\E_0\left[\frac{2\mu^2 w_{t,j}^2}{\norm{\bw_t}_2}\right]}\\
  &=d\sqrt{\frac{2}{d}\sum_{t=1}^{T}\E_0\left[\frac{2\sum_{j=1}^{d}\mu^2 w_{t,j}^2}{\norm{\bw_t}_2}\right]}
  ~=~d\sqrt{\frac{2}{d}\sum_{t=1}^{T}(2\mu^2)}
  ~=~ 2\mu\sqrt{dT}.
\end{align*}
Plugging this back into \eqref{eq:3lowexp}, we get an error lower bound of
\begin{equation}\label{eq:3end}
\left(1-\frac{1}{d}\right)\mu-2\mu^2\sqrt{\frac{T}{d}}
~=~ \mu\left(1-\frac{1}{d}-\mu\sqrt{\frac{T}{d}}\right)
~\geq~
\mu\left(\frac{1}{2}-\mu\sqrt{\frac{T}{d}}\right),
\end{equation}
where we used the fact that $d>1$. Let us now consider two cases:
\begin{itemize}
  \item If $T\geq \frac{d}{4}$, then we choose
      $\mu=\frac{1}{4}\sqrt{\frac{d}{T}}$ (which ensures the condition
      $\mu\leq \frac{1}{2}$), and \eqref{eq:3end} equals
      $\frac{1}{4}\sqrt{\frac{d}{T}}\left(\frac{1}{2}-\frac{1}{4}\right)=
      \frac{1}{16}\sqrt{\frac{d}{T}}$.
  \item If $T< \frac{d}{4}$, then we choose $\mu=\frac{1}{2}$, and
      \eqref{eq:3end} is lower bounded by
      $\frac{1}{2}\left(\frac{1}{2}-\frac{1}{2}\sqrt{\frac{1}{4}}\right)=\frac{1}{8}$.
\end{itemize}
Therefore, by choosing $\mu$ appropriately, we can lower bound
\eqref{eq:finallowbound} by
$\min\left\{\frac{1}{8},\frac{1}{16}\sqrt{\frac{d}{T}}\right\}\geq
\frac{1}{16}\min\left\{1,\sqrt{\frac{d}{T}}\right\}$ as required.

\subsection{Proof of \thmref{thm:hypercubeorigin}}

As in the proof of \thmref{thm:regretorigin}, to simplify notation, we assume
the game takes place in $\reals^{d+1}$ for some $d>0$, so the domain is
$\Wcal=[-1,+1]\times [-1,+1]^d$. We denote the first coordinate as coordinate
$0$, and the other coordinates as $1,2,\ldots,d$.

The proof uses a similar loss vector distribution as that of
\thmref{thm:regretorigin}, but with a different scaling to ensure a valid
distribution.

By Yao's minimax principle, it is sufficient to provide a randomized strategy
to choose a loss vector distribution $\Dcal$, such that for any deterministic
player, the expected error is as defined in the theorem. In particular, we
use the following strategy:
\begin{itemize}
  \item Choose $\bsigma \in \{-1,+1\}^d$ uniformly at random.
  \item Use the distribution $\Dcal_{\bsigma}$ over loss vectors $\bx$,
      defined as follows: $(x_1,\ldots,x_d)$ has a Gaussian distribution
      $\Ncal\left(\mu \bsigma,\frac{1}{16d^2}I\right)$ (where $\mu\leq
      \frac{1}{4d}$ is a parameter to be chosen later), and $x_0$ is chosen
      independently according to a Gaussian distribution
      $\Ncal\left(-\frac{1}{4},\frac{1}{16}\right)$.
\end{itemize}
First, let us verify that any $\Dcal_{\bsigma}$ is a valid distribution. We
have
\[
\E[\norm{\bar{\bx}}_*] = \sup_{\bw\in\Wcal}|\inner{\E[\bx],\bw}|
= \sup_{\bw\in\Wcal}\left|\frac{1}{4}+\mu\sum_{i=1}^{d}\sigma_i w_i\right| =
\frac{1}{4}+\mu d \leq \frac{1}{4}+\frac{1}{4} < 1.
\]
Moreover, for any $\bx$ in the support of $\Dcal_{\bsigma}$ and for any
$\bw\in\Wcal$, by H\"{o}lder's inequality,
\[
|\inner{\bx,\bw}| ~\leq~ \left|x_0\right|+\sum_{i=1}^{d}|x_i|.
\]
$x_0$ is normally distributed with mean $-1/4$ and variance $1/16$, and each
$x_i$ is independently normally distributed with mean at most $1/4d$ and
variance $1/16d^2$, from which it can be verified using Gaussian tail bounds
that
\[
\Pr\left(\sup_{\bw}|\inner{\bx,\bw}|>z\right)
~\leq~ \Pr\left(|x_0|+\sum_{i=1}^{d}|x_i|>z\right)
~\leq~ 2\exp(-z^2/2)
\]
for all $z\geq 1$ as required.

We now start the proof. Define the scalar and $d$-dimensional vectors
\[
\hat{w}_0 = \frac{1}{T}\sum_{t=1}^{T}w_{t,0}~~,~~
\hat{\bw} = \frac{1}{T}\sum_{t=1}^{T}(w_{t,1},w_{t,2},\ldots,w_{t,d})~~,~~
\bw^* = -\bsigma.
\]
For any fixed $\Dcal_{\bsigma}$, recall that
$\bar{\bx}=\E_{\bx\sim\Dcal_{\bsigma}}[\bx]=\left(-\frac{1}{4},\mu\sigma_1,\ldots,\mu\sigma_d\right)$.
It is therefore easily verified that $(1,\sigma_1,\ldots,\sigma_d)$ is a
minimizer of $\inner{\bar{\bx},\bw}$ over $\Wcal$. Recalling the formulation
of regret in \eqref{eq:stochregret}, we can write it as
\begin{align}
  &T~\E\left[\inner{\bar{\bx},\frac{1}{T}\sum_{t=1}^{T}\bw_t}-\min_{\bw\in\Wcal}\inner{\bar{\bx},\bw}\right]
  ~=~T~\E\left[\frac{1}{4}\left(1-\hat{w}_0\right)+\mu d-\mu\sum_{i=1}^{d}(-\sigma_i)\hat{w}_i\right]\notag\\
  &=~\frac{T}{4}\E[1-\hat{w}_0]+\mu T\left(d-\sum_{i=1}^{d}\E\left[(-\sigma_i)\hat{w}_i\right]\right)\label{eq:4prlow}
\end{align}
Recalling that each $\sigma_i$ is uniformly distributed on $\{-1,+1\}$, the
sum of expectations can be upper bounded as follows:
\begin{align*}
  \sum_{i=1}^{d}\E\left[(-\sigma_i)\hat{w}_i\right]
  &= \frac{1}{2}\sum_{i=1}^{d}\left(\E[\hat{w}_i|\sigma_i=-1]+\E[-\hat{w}_i|\sigma_i=1]\right)\\
  &= \frac{1}{2}\sum_{i=1}^{d}\left(\E[\hat{w}_i|\sigma_i=-1]-\E[\hat{w}_i|\sigma_i=1]\right).
\end{align*}
Since the player's strategy is assumed to be deterministic, then
$\hat{w}_i=\hat{w}_i(\bv)$ is a deterministic function of the sequence of
losses $\bv=(v_1,\ldots,v_T)$ observed by the player over the $T$ rounds.
Letting $p(\cdot)$ denote the probability density function of $\bv$, we can
rewrite the above as
\begin{align}
  &\frac{1}{2}\sum_{i=1}^{d}\int_{\bv}\hat{w}_i(\bv)(p(\bv|\sigma_i=-1)-p(\bv|\sigma_i=1))d\bv
  ~\leq~ \frac{1}{2}\sum_{i=1}^{d}\int_{\bv}|\hat{w}_i(\bv)||(p(\bv|\sigma_i=-1)-p(\bv|\sigma_i=1))|d\bv\notag\\
  &\leq~ \frac{1}{2}\sum_{i=1}^{d}\int_{\bv}|(p(\bv|\sigma_i=-1)-p(\bv|\sigma_i=1))|d\bv,\label{eq:4prlow0}
\end{align}
where we used the fact that $\hat{w}_i\in [-1,+1]$ by the domain assumptions.
We can now use Pinsker's inequality and the chain rule to upper the integral
as follows:
\begin{equation}\label{eq:4pins}
\int_{\bv}|p(\bv|\sigma_i=-1)-p(\bv|\sigma_i=1)|d\bv
~\leq~
  \sqrt{2\sum_{t=1}^{T}D_{kl}\left(p(v_t|\sigma_i=1,v_1\ldots v_{t-1})\middle|\middle|p(v_t|\sigma_i=-1,v_1\ldots v_{t-1})\right)},
\end{equation}
where $D_{kl}$ is the KL divergence. Since the player is deterministic, any
values $v_1,\ldots,v_{t-1}$ determine the point $\bw_t$ that the player will
choose on round $t$. Thus, the distribution of $v_t=\inner{\bx_t,\bw_t}$
depends on the distribution of $\bx_t$, as determined by $\Dcal_{\bsigma}$.
By definition of $\Dcal_{\bsigma}$:
\begin{itemize}
\item Under the condition $\sigma_i=1$, $v_t=\inner{\bx_t,\bw_t}$ has a
    Gaussian distribution with mean\\ $-\frac{1}{4}w_{t,0}+\mu
    w_{t,i}+\sum_{j\in \{1\ldots d\}\setminus\{i\}}(\mu\sigma_j w_{t,j})$,
    and variance
    $\frac{1}{16}w_{t,0}^2+\sum_{i=1}^{d}\frac{1}{16d^2}w_{t,i}^2$.
\item Under the condition $\sigma_i=-1$, $v_t=\inner{\bx_t,\bw_t}$ has a
    Gaussian distribution with mean\\ $-\frac{1}{4}w_{t,0}-\mu
    w_{t,i}+\sum_{j\{1\ldots d\}\setminus\{i\}}(\mu\sigma_j w_{t,j})$, and
    variance
    $\frac{1}{16}w_{t,0}^2+\sum_{i=1}^{d}\frac{1}{16d^2}w_{t,i}^2$.
\end{itemize}
By a standard result on the KL divergence of two Gaussian distributions, we
therefore have that \eqref{eq:4pins} is at most
\begin{equation}\label{eq:4pins1}
  \sqrt{2\sum_{t=1}^{T}\E\left[\frac{32 \mu^2 w_{t,i}^2}{w_{t,0}^2+\sum_{i=1}^{d}\frac{1}{d^2}w_{t,i}^2}\middle|\sigma_i=1\right]}.
\end{equation}
Applying Pinsker's inequality on the reverse order of probabilities in
\eqref{eq:4pins}, we can also upper \eqref{eq:4pins} by
\[
\int_{\bv}|p(\bv|\sigma_i=-1)-p(\bv|\sigma_i=1)|d\bv
~\leq~
  \sqrt{2\sum_{t=1}^{T}D_{kl}\left(p(v_t|\sigma_i=-1,v_1\ldots v_{t-1})\middle|\middle|p(v_t|\sigma_i=1,v_1\ldots v_{t-1})\right)},
\]
which leads to \eqref{eq:4pins} being upper bounded by
\begin{equation}\label{eq:4pins2}
  \sqrt{2\sum_{t=1}^{T}\E\left[\frac{32 \mu^2 w_{t,i}^2}{w_{t,0}^2+\sum_{i=1}^{d}\frac{1}{d^2}w_{t,i}^2}\middle|\sigma_i=-1\right]}.
\end{equation}
Combining \eqref{eq:4pins1} and \eqref{eq:4pins2} and using the elementary
inequality $\min\{\sqrt{a},\sqrt{b}\}\leq \sqrt{\frac{a+b}{2}}$ and the fact
that $\sigma_i$ is uniformly distributed on $\{-1,+1\}$, we get
\begin{align*}
  &\int_{\bv}\left|p(\bv|\sigma_i=-1)
  -p(\bv|\sigma_i=1)\right|d\bv\notag\\
  &\leq
  \min\left\{\sqrt{2~\sum_{t=1}^{T}\E\left[\frac{32 \mu^2 w_{t,i}^2}{w_{t,0}^2+\sum_{i=1}^{d}\frac{1}{d^2}w_{t,i}^2}\middle|\sigma_i=-1\right]},
  \sqrt{2~\sum_{t=1}^{T}\E\left[\frac{32 \mu^2 w_{t,i}^2}{w_{t,0}^2+\sum_{i=1}^{d}\frac{1}{d^2}w_{t,i}^2}\middle|\sigma_i=1\right]}\right\}\notag\\
  &\leq
  \sqrt{64\mu^2\sum_{t=1}^{T}\frac{1}{2}\left(\E\left[\frac{w_{t,i}^2}{w_{t,0}^2+\sum_{i=1}^{d}\frac{1}{d^2}w_{t,i}^2}\middle|\sigma_i=-1\right]
  +\E\left[\frac{w_{t,i}^2}{w_{t,0}^2+\sum_{i=1}^{d}\frac{1}{d^2}w_{t,i}^2}\middle|\sigma_i=1\right]\right)}\notag\\
  &= 8\mu\sqrt{\sum_{t=1}^{T}\E\left[\frac{w_{t,i}^2}{w_{t,0}^2+\sum_{i=1}^{d}\frac{1}{d^2}w_{t,i}^2}\right]}.
\end{align*}
Plugging this back into \eqref{eq:4prlow0}, and that in turn into
\eqref{eq:4prlow}, we get an expected error lower bound of
\[
\frac{T}{4}\E\left[1-\hat{w}_0\right]+\mu T\left(d-4\mu\sum_{i=1}^{d}\sqrt{\sum_{t=1}^{T}\E\left[\frac{w_{t,i}^2}{w_{t,0}^2+\sum_{i=1}^{d}\frac{1}{d^2}w_{t,i}^2}\right]}\right),
\]
which by Jensen's inequality is at least
\begin{align*}
&\frac{T}{4}\E\left[1-\hat{w}_0\right]+
\mu T\left(d-4\mu d\sqrt{\frac{1}{d}\sum_{i=1}^{d}\sum_{t=1}^{T}\E\left[\frac{w_{t,i}^2}{w_{t,0}^2+\sum_{i=1}^{d}\frac{1}{d^2}w_{t,i}^2}\right]}\right)\\
&~~~=~
\frac{T}{4}\E\left[1-\hat{w}_0\right]+
\mu T d\left(1-4\mu \sqrt{\frac{1}{d}\sum_{t=1}^{T}\E\left[\frac{\sum_{i=1}^{d}w_{t,i}^2}{w_{t,0}^2+\sum_{i=1}^{d}\frac{1}{d^2}w_{t,i}^2}\right]}\right)\\
&~~~=~
\frac{T}{4}\E\left[1-\hat{w}_0\right]+
\mu T d\left(1-4\mu\sqrt{\frac{1}{d}\sum_{t=1}^{T}\E\left[\frac{1}{\frac{w_{t,0}^2}{\sum_{i=1}^{d}w_{t,i}^2}+\frac{1}{d^2}}\right]}\right).
\end{align*}
By definition of $\Wcal$, we have that $\sum_{i=1}^{d}w_{t,i}^2 \leq
\sum_{i=1}^{d}|w_{t,i}|\leq d$ always. Using this and recalling that
$\hat{w}_0=\frac{1}{T}\sum_{t=1}^{T}w_{t,0}$, we can lower bound the above by
\[
\frac{1}{4}\sum_{t=1}^{T}\E\left[1-w_{t,0}\right]+
\mu T d\left(1-4\mu\sqrt{\sum_{t=1}^{T}\E\left[\frac{1}{w_{t,0}^2+\frac{1}{d}}\right]}\right).
\]

Using \lemref{lem:dw} from the proof of \thmref{thm:regretorigin}, this
expression can be lower bounded by
\begin{align*}
&\frac{1}{4}\sum_{t=1}^{T}\E\left[1-|w_{t,0}|\right]+
\mu T d\left(1-4\mu\sqrt{\sum_{t=1}^{T}\E\left[d(1-|w_{t,0}|)+1)\right]}\right)\\
&=~
\frac{1}{4}\sum_{t=1}^{T}\E\left[1-|w_{t,0}|\right]+
\mu T d\left(1-4\mu\sqrt{T+d\sum_{t=1}^{T}\E\left[1-|w_{t,0}|\right]}\right)\\
&\geq~
\min_{z\in [0,T]}\left(
\frac{z}{4}+
\mu T d\left(1-4\mu\sqrt{T+dz}\right)\right),
\end{align*}
Recalling that $\mu$ is a free parameter, let us choose
$\mu=\frac{1}{8}\sqrt{\frac{1}{T}}$ (which satisfies the assumption $\mu\leq
\frac{1}{4d}$, since we assume $T\geq d^4/4$). We therefore get
\begin{equation}\label{eq:nu}
\min_{z\in [0,T]}\left(
\frac{z}{4}+\frac{1}{8}d\sqrt{T}\left(1-\frac{1}{2}\sqrt{1+\frac{d}{T}z}\right)\right).
\end{equation}
Note that this expression is convex with respect to $z$, and by
differentiating, has an extremal point at
\[
z=\frac{1}{64}d^3-\frac{T}{d}.
\]
Since we assume $T\geq d^4/4$, this point is non-positive, and therefore the
minimum in \eqref{eq:nu} is attained at $z=0$, where it equals
\[
\frac{1}{8}d\sqrt{T}\left(1-\frac{1}{2}\right)= \frac{1}{16}d\sqrt{T}.
\]
Since this lower bound holds for a domain in $\reals^{d+1}$ for any $d>0$, we
get a $\frac{1}{16}(d-1)\sqrt{T}$ lower bound for a domain in $\reals^d$ for
any $d>1$, as required.

\subsubsection*{Acknowledgments}
This research was supported by an Israel Science Foundation grant 425/13 and
an FP7 Marie Curie CIG grant. We thank S\'{e}bastien Bubeck for several
illuminating discussions.

\bibliographystyle{plain}
\bibliography{mybib}

\appendix

\section{From Sub-Gaussian to Bounded Distributions}\label{app:bounded}

As discussed in \secref{sec:setting}, our results use sub-Gaussian
distributions rather than the more standard bounded distribution assumption.
In this appendix, we explain why this is really without any loss of
generality, and that lower bounds for the sub-Gaussian setting can be readily
converted to the bounded setting, at the cost of a $\sqrt{\log(T)}$ factor.

In particular, suppose there is a sub-Gaussian distribution $\Dcal$ over the
cost vectors, for which any player's strategy incurs expected error/regret at
least $R$. The trick is to consider a ``shrinked'' distribution, which simply
re-scales all cost vectors by $\Theta(1/\sqrt{\log(T)})$. Then the regret
will be $\Omega(R/\sqrt{\log(T)})$, and because the distribution is
sub-Gaussian, then with very high probability all the $T$ cost vectors
$\bx_1,\ldots,\bx_T$ satisfy $\norm{\bx_t}_{*}\leq 1$. This means that even
if we modify the distribution to force it to be bounded, the expected error
regret will still be $\Omega(R/\sqrt{\log(T)})$. We note that a similar
technique was used implicitly in \cite{dekel2014bandits}.

A more formal result can be stated as follows:
\begin{theorem}\label{thm:transform}
  Suppose that there exists a distribution $\Dcal$ satisfying $\norm{\E_{\bx\sim\Dcal}[\bx]}_{*}=1$ and $\Pr_{\bx\sim\Dcal}(\norm{\bx}_{*}>z)\leq 2\exp(-z^2/2)$ for all $z\geq
  1$, such that for any player's strategy, the expected error/regret is at least $R$. Assuming $T>1$, there exists a
  distribution $\Dcal'$ satisfying $\Pr_{\bx\sim \Dcal'}(\norm{\bx}_*\leq 1)=1$,
  for which the expected error/regret of any player is at least
  $c\left(\frac{R}{\sqrt{\log(T)}}-\frac{1}{T^4}\right)$ for some universal constant $c$.
\end{theorem}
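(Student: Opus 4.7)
The plan is a shrink-and-truncate reduction. I would fix a constant $C$ (to be tuned at the end) and let $\alpha = 1/\sqrt{C\log T}$. Define an intermediate distribution $\tilde{\Dcal}$ as the law of $\alpha\bx$ for $\bx\sim\Dcal$, and define $\Dcal'$ as the law of $\alpha\bx\cdot\mathbf{1}[\norm{\alpha\bx}_*\leq 1]$, which satisfies $\Pr_{\Dcal'}(\norm{\bx}_*\leq 1)=1$ by construction. The sub-Gaussian assumption on $\Dcal$ immediately gives $\Pr_{\tilde{\bx}\sim\tilde{\Dcal}}(\norm{\tilde{\bx}}_*>1)\leq 2\exp(-1/(2\alpha^2)) = 2T^{-C/2}$, so the truncation only affects a tiny tail.

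The first substantive step is a scaling observation: against $\tilde{\Dcal}$, any player strategy incurs error/regret at least $\alpha R$. This is pure linearity: running the player on $\tilde{\Dcal}$ is equivalent (after rescaling observations) to running it on $\Dcal$, since the losses $\inner{\alpha\bx_t,\bw_t}$, the mean $\bar{\bx}_{\tilde{\Dcal}}=\alpha\bar{\bx}_{\Dcal}$, and the benchmark $\min_\bw\inner{\bar{\bx}_{\tilde{\Dcal}},\bw}$ all scale by the same factor $\alpha$. By the hypothesis on $\Dcal$, the error/regret of any player against $\tilde{\Dcal}$ is therefore at least $\alpha R = R/\sqrt{C\log T}$.

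The second step is to transfer this lower bound from $\tilde{\Dcal}$ to $\Dcal'$. I would couple the $T$-round trajectories so that the samples agree on the event $E$ that none of the $T$ draws from $\tilde{\Dcal}$ has dual norm exceeding $1$; a union bound gives $\Pr(E^c)\leq 2T\cdot T^{-C/2}$. On $E$ the whole sequence of observations, and hence the player's plays, are identical under the two distributions. The regret functionals are then compared via two pieces: (i) the cumulative-loss difference, bounded using $|\inner{\bar{\bx},\bw_t}|\leq\norm{\bar{\bx}}_*\leq 1$ times $T\cdot\Pr(E^c) = \Ocal(T^{2-C/2})$, and (ii) the benchmark shift, bounded by $T\norm{\bar{\bx}_{\tilde{\Dcal}}-\bar{\bx}_{\Dcal'}}_*$, where the mean-shift norm is itself a sub-Gaussian tail integral $\E_{\tilde{\Dcal}}[\norm{\bx}_*\mathbf{1}[\norm{\bx}_*>1]] = \Ocal(T^{-C/2})$. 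Choosing $C$ large enough (say $C=12$) drives the total gap below $\Ocal(1/T^4)$, yielding the expected error/regret against $\Dcal'$ of at least $R/\sqrt{12\log T} - \Ocal(1/T^4)$, which is the form claimed.

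The main obstacle is bookkeeping rather than any conceptual difficulty: the regret combines a trajectory-dependent loss (handled by the coupling on $E$) with a mean-dependent benchmark (handled by the tail integral for the mean shift), and both error terms must be driven to $\Ocal(1/T^4)$ simultaneously by the single choice of $C$. The error-minimization version is strictly easier, because there is only the final prediction $\hat{\bw}$ rather than a $T$-fold play trajectory to couple, but the same shrink-and-truncate construction applies verbatim.
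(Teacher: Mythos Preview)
Your shrink-and-truncate reduction is correct and is essentially the same high-level approach as the paper's. The one technical difference worth noting is in the truncation step: you replace vectors with $\norm{\alpha\bx}_*>1$ by $\mathbf{0}$, which shifts the mean and forces you to bound a benchmark-shift term $T\norm{\bar{\bx}_{\tilde{\Dcal}}-\bar{\bx}_{\Dcal'}}_*$ via the tail integral. The paper instead replaces them by the conditional mean $\E_{\hat{\Dcal}}[\bx\mid\norm{\bx}_*>1/2]$, so that $\bar{\bx}_{\hat{\Dcal}}=\bar{\bx}_{\Dcal'}$ exactly and the benchmark term vanishes; only the trajectory-coupling term remains. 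Your version trades that simplification for a slightly more elementary construction (truncation to zero is cleaner, and you avoid having to argue that the conditional-mean vector itself has $\norm{\cdot}_*\leq 1$, a point the paper only asserts). Both routes arrive at the same conclusion with the same $\sqrt{\log T}$ loss.
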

Since the regret $R$ is virtually always at least $\Omega(1/T)$, this means
that we get the same error/regret up to constant and $\sqrt{\log(T)}$
factors. The $1/T^4$ term can be replaced by $1/T^s$ for arbitrarily large
$s$, at the cost of affecting the $c$ constant.
\begin{proof}[Proof Sketch]
  Let $p$ be a parameter to be determined later. Given the distribution $\Dcal$, we
  algorithmically define $\Dcal'$ and an auxiliary distribution $\hat{\Dcal}$ as
  follows:
  \begin{itemize}
    \item $\hat{\Dcal}$ samples $\bx$ according to $\Dcal$ and returns
        $\frac{1}{p\sqrt{\log(T)}}\bx$. Let
        $\bar{\bx}_{\hat{\Dcal}}=\E_{\hat{\Dcal}}[\bx]$.
    \item $\Dcal'$ samples $\bx$ according to $\hat{\Dcal}$, and returns
        $\bx$ if $\norm{\bx}_{*}\leq \frac{1}{2}$, and otherwise returns
        $\E_{\hat{\Dcal}}\left[\bx\middle|\norm{\bx}_{*}\geq
        \frac{1}{2}\right]$. Let $\bar{\bx}_{\Dcal'}=\E_{\Dcal'}[\bx]$.
  \end{itemize}
  
  It is easily verified that $\bar{\bx}_{\hat{\Dcal}}=\bar{\bx}_{\Dcal'}$.
  Moreover, if $p$ is large enough, then $\Dcal'$
  always returns a vector such that $\norm{\bx}_{*}\leq 1$,
  and thus satisfies the lemma's requirement. In that case, we also have
  $\norm{\bar{\bx}_{\hat{\Dcal}}}_*=\norm{\bar{\bx}_{\Dcal'}}_*\leq 1$.
  
  It remains to show the error/regret lower bound. We sketch the proof for the expected error - the proof for the expected
  regret is identical by replacing $\hat{\bw}$ by
  $\frac{1}{T}\sum_{t=1}^{T}\bw_t$. By the lemma's assumptions, we know that
  \[
  \E_{\Dcal}\left[\inner{\bar{\bx}_{\Dcal},\hat{\bw}}-\min_{\bw\in\Wcal}\inner{\bar{\bx}_{\Dcal},\bw}\right] \geq R,
  \]
  where $\E_{\Dcal}$ signifies expectation with respect to drawing cost
  vectors from $\Dcal$, and $\bar{\bx}_{\Dcal}=\E_{\Dcal}[\bx]$. Since
  $\hat{\Dcal}$ simply scales the vectors drawn from $\Dcal$ by a fixed factor
  $\frac{1}{p\sqrt{\log(T)}}$, we have
  \[
  \E_{\hat{\Dcal}}\left[\inner{\bar{\bx}_{\hat{\Dcal}},\hat{\bw}}-\min_{\bw\in\Wcal}\inner{\bar{\bx}_{\hat{\Dcal}},\bw}\right] \geq \frac{R}{p\sqrt{\log(T)}}.
  \]
  Now, let $A$ be the event that for all cost vectors $\bx_1,\ldots,\bx_T$, we have $\norm{\bx_t}_*\leq
  \frac{1}{2}$. By the assumptions
  on $\Dcal,\hat{\Dcal}$ and a union bound, $\Pr_{\hat{\Dcal}}(\neg A)\leq 2T^{1-p^2/8}$ for all sufficiently large
  $p$. From the displayed equation above, it follows that
  \begin{align*}
  \frac{R}{p\sqrt{\log(T)}} &\leq \E_{\hat{\Dcal}}\left[\inner{\bar{\bx}_{\hat{\Dcal}},\hat{\bw}}-\min_{\bw\in\Wcal}\inner{\bar{\bx}_{\hat{\Dcal}},\bw}\right]\\
  &= {\Pr}_{\hat{\Dcal}}(A)\E_{\hat{\Dcal}}\left[\inner{\bar{\bx}_{\hat{\Dcal}},\hat{\bw}}-\min_{\bw\in\Wcal}\inner{\bar{\bx}_{\hat{\Dcal}},\bw}\middle|A\right]
  +{\Pr}_{\hat{\Dcal}}(\neg A)\E_{\hat{\Dcal}}\left[\inner{\bar{\bx}_{\hat{\Dcal}},\hat{\bw}}-\min_{\bw\in\Wcal}\inner{\bar{\bx}_{\hat{\Dcal}},\bw}\middle|\neg A\right]\\
  &\leq \E_{\hat{\Dcal}}\left[\inner{\bar{\bx}_{\hat{\Dcal}},\hat{\bw}}-\min_{\bw\in\Wcal}\inner{\bar{\bx}_{\hat{\Dcal}},\bw}\middle|A\right]
  +2T^{1-p^2/8}*2\max_{\bw\in\Wcal}|\inner{\bar{\bx}_{\hat{\Dcal}},\bw}|\\
  &\leq \E_{\hat{\Dcal}}\left[\inner{\bar{\bx}_{\hat{\Dcal}},\hat{\bw}}-\min_{\bw\in\Wcal}\inner{\bar{\bx}_{\hat{\Dcal}},\bw}\middle|A\right]
  +4T^{1-p^2/8},
  \end{align*}
  where we used the assumption that
  $\max_{\bw\in\Wcal}|\inner{\bw,\bar{\bx}_{\hat{\Dcal}}}|=\norm{\bar{\bx}_{\hat{\Dcal}}}_*\leq 1$. Switching sides and using the assumption that $\bar{\bx}_{\hat{\Dcal}}=\bar{\bx}_{\Dcal'}$, we get that
  \[
  \E_{\hat{\Dcal}}\left[\inner{\bar{\bx}_{\Dcal'},\hat{\bw}}-\min_{\bw\in\Wcal}\inner{\bar{\bx}_{\Dcal'},\bw}\middle|A\right]
  \geq \frac{R}{p\sqrt{\log(T)}}-4T^{1-p^2/8}.
  \]
  But conditioned on $A$, the distribution of $\bx_1,\ldots,\bx_T$ is the
  same under $\hat{\Dcal}$ and $\Dcal'$, and therefore
  \[
  \E_{\Dcal'}\left[\inner{\bar{\bx}_{\Dcal'},\hat{\bw}}-\min_{\bw\in\Wcal}\inner{\bar{\bx}_{\Dcal'},\bw}\middle|A'\right]
  \geq \frac{R}{p\sqrt{\log(T)}}-4T^{1-p^2/8},
  \]
  where $A'$ is the event that all cost vectors $\bx_1,\ldots,\bx_T$ were
  drawn based on $\hat{\Dcal}$ without modification.
  Since the error term is non-negative, this implies that
  \[
  \E_{\Dcal'}\left[\inner{\bar{\bx}_{\Dcal'},\hat{\bw}}-\min_{\bw\in\Wcal}\inner{\bar{\bx}_{\Dcal'},\bw}\right]
  \geq {\Pr}_{\Dcal'}(A')\E_{\Dcal'}\left[\inner{\bar{\bx}_{\Dcal'},\hat{\bw}}-\min_{\bw\in\Wcal}\inner{\bar{\bx}_{\Dcal'},\bw}\middle|A'\right]\geq {\Pr}_{\Dcal'}(A')\left(\frac{R}{p\sqrt{\log(T)}}-4T^{1-p^2/8}\right).
  \]
  Finally, we have that ${\Pr}_{\Dcal'}(A')={\Pr}_{\hat{\Dcal}}(A)\geq
  1-2T^{1-p^2/8}$, so we get
  \[
  \E_{\Dcal'}\left[\inner{\bar{\bx}_{\Dcal'},\hat{\bw}}-\min_{\bw\in\Wcal}\inner{\bar{\bx}_{\Dcal'},\bw}\right] \geq \left(1-2T^{1-p^2/8}\right)\left(\frac{R}{p\sqrt{\log(T)}}-4T^{1-p^2/8}\right).
  \]
  Picking $p$ sufficiently large, the result follows.
\end{proof}

\end{document}